\def\eqref#1{equation~\ref{#1}}
\def\1{\bm{1}}
\def\vp{{\bm{p}}}
\def\vv{{\bm{v}}}
\DeclareMathAlphabet{\mathsfit}{\encodingdefault}{\sfdefault}{m}{sl}
\SetMathAlphabet{\mathsfit}{bold}{\encodingdefault}{\sfdefault}{bx}{n}
\def\sN{{\mathbb{N}}}
\def\sR{{\mathbb{R}}}
\newcommand{\E}{\mathbb{E}}
\newcommand{\softmax}{\mathrm{softmax}}
\DeclareMathOperator*{\argmax}{arg\,max}
\DeclareMathOperator*{\argmin}{arg\,min}
\DeclareMathOperator{\sign}{sign}
\def\cD{{\mathcal{D}}}
\def\cK{{\mathcal{K}}}
\def\cL{{\mathcal{L}}}
\def\cP{{\mathcal{P}}}
\def\cS{{\mathcal{S}}}
\def\cX{{\mathcal{X}}}
\newcommand{\nLD}[2]{\nabla \cL(\theta_{#1}; \cD_{#2})}
\newcommand{\LD}[2]{\cL(\theta_{#1}; \cD_{#2})}
\newcommand{\nLd}[2]{\nabla \cL(\theta_{#1}; D_{#2})}
\newcommand{\p}{\vp}
\newcommand{\Dtr}{D_{\text{train}}}
\newcommand{\Dtri}{D_{\text{train},i}}
\newcommand{\Dev}{D_{\text{eval}}}
\newcommand{\Devi}{D_{\text{eval},i}}
\newcommand{\Xtr}{D_{\text{train}}}
\newcommand{\Xev}{D_{\text{eval}}}
\newcommand{\batch}{\cD}
\newtheorem{definition}{Definition}
\newtheorem{lemma}{Lemma}
\title{
\textsf{\raisebox{-0.1ex}{\includegraphics[width=0.03\linewidth]{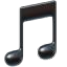}}R\&B}: Domain \underline{R}egrouping and Data Mixture \underline{B}alancing for Efficient Foundation Model Training
}
\author[]{Albert~Ge\thanks{\;Corresponding author. Email: afge@wisc.edu.}}
\author[]{Tzu-Heng~Huang}
\author[]{John~Cooper}
\author[]{Avi~Trost}
\author[]{Ziyi~Chu}
\author[]{Satya~Sai~Srinath~Namburi~GNVV}
\author[]{Ziyang~Cai}
\author[]{Kendall~Park}
\author[]{Nicholas~Roberts}
\author[]{Frederic~Sala}
\affil[]{University of Wisconsin-Madison}
\affil[]{\footnotesize{\texttt{\{afge, thuang273, 
jfcooper2, astrost, zchu28, zcai75, sgnamburi\}@wisc.edu}} \\
\footnotesize{\texttt{\{kendall, nick11roberts, fredsala\}@cs.wisc.edu}}}
\begin{document}

\maketitle

\begin{abstract}
Data mixing strategies have successfully reduced the costs involved in training language models.
While promising, such methods suffer from two flaws.
First, they rely on predetermined data domains (e.g., data sources, task types), which may fail to capture critical semantic nuances, leaving performance on the table.
Second, these methods scale with the number of domains in a computationally prohibitive way. 
We address these challenges via \textsf{R\&B}, a framework that re-partitions training data based on semantic similarity (\textsf{\underline{R}egroup}) to create finer-grained domains,
and efficiently optimizes the data composition (\textsf{\underline{B}alance}) by leveraging a Gram matrix induced by domain gradients obtained throughout training. 
Unlike prior works, it removes the need for additional compute to obtain evaluation information such as losses or gradients. 
We analyze this technique under standard regularity conditions and provide theoretical insights that justify \textsf{R\&B}'s effectiveness compared to non-adaptive mixing approaches.
Empirically, we demonstrate the effectiveness of \textsf{R\&B} on five diverse datasets ranging from natural language to reasoning and multimodal tasks.
\textbf{With as little as 0.01\% additional compute overhead,} \textsf{R\&B} \textbf{matches or exceeds the performance} of state-of-the-art data mixing strategies. 
\end{abstract}

\section{Introduction}

Large language models depend on vast, diverse datasets, but the shift to general-purpose foundation models has created a fundamental imbalance: potential training data vastly exceeds available computational resources.
This has driven the development of data-efficient strategies that maximize performance while minimizing compute costs.
Among these approaches, \textbf{data mixing} is particularly promising.
By optimizing the composition of training data used---rather than simply increasing its volume---we can achieve comparable or superior performance with significantly fewer computational resources.

A wide variety of data mixture optimization techniques have been proposed \citep{fan_doge_2024, xie_doremi_2023, chen_skill-it_2023, chen_aioli_2024, jiang_adaptive_2024}. These adjust the relative proportions (``mixture") of training data from different predefined domains, also known as \textbf{skills}. Skills are often assigned to data based on human judgments or on source metadata \citep{wettig_organize_2025}. For example, in an instruction-tuning dataset, skills might include domain categories like open-question answering or summarization. For other datasets, skills may be defined based on where the data was scraped from (Wikipedia, StackOverflow, etc.).
We find, however, that these coarse human-defined categorizations fail to capture the optimal groupings for data mixing. 
That is, \textbf{human categorizations of skills are suboptimal when used for the development of LLM capabilities}. 

Consider the Dolly-15k instruction-tuning dataset, which categorizes its data into general domains such as open-question answering, information extraction, and summarization \citep{conover_free_2023}. Rather than directly optimizing these coarse categories, our approach first re-partitions the data into finer-grained, semantically-grouped skills (Fig. \ref{fig:diagram}, left).
Optimizing the proportions across these semantically-clustered skills can significantly improve training performance over that of the general predefined domains. These improvements are even more pronounced when the number of skill partitions is optimized (Fig. \ref{fig:diagram}, right).
%

However, the more granular semantic-based clustering approach has a \emph{critical drawback}.
As the number of skills increases, prior data mixing methods become computationally prohibitive.
Existing approaches typically require additional evaluations---either through forward passes over evaluation datasets for each skill or by computing per-skill gradient information derived from target tasks.
To overcome this limitation, \textbf{we propose an efficient gradient-based approach that leverages information already computed during training, bypassing the need for these expensive evaluations.}
%
%
%
%
%
%



These insights motivate our approach, \textbf{\underline{R}egroup \& \underline{B}alance} (\textsf{R\&B}), a two-stage framework for efficient data mixture optimization. First, we repartition (\underline{R}egroup) training data into semantically coherent clusters based on embedding similarity. Then, we dynamically optimize domain weights (\underline{B}alance) to capture individual domain contributions and cross-domain relationships, leveraging domain gradients computed throughout normal training. This produces the best of all worlds: it unlocks the performance gains in fine-grained skill clusters while dramatically reducing computational complexity. 
%
%
%

\begin{figure*}
    \centering
    \begin{subfigure}[b]{0.75\textwidth}
        \centering
        \includegraphics[width=\textwidth]{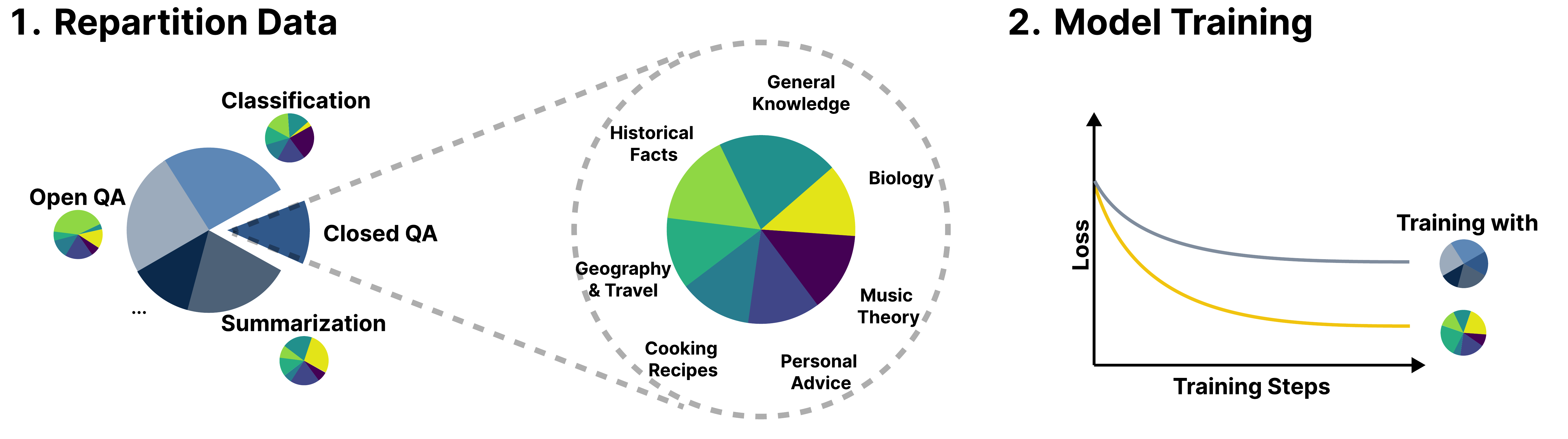}
        \label{fig:dolly}
    \end{subfigure}%
    \hfill
    \begin{subfigure}[b]{0.25\textwidth}
        \centering
        \includegraphics[width=\textwidth]{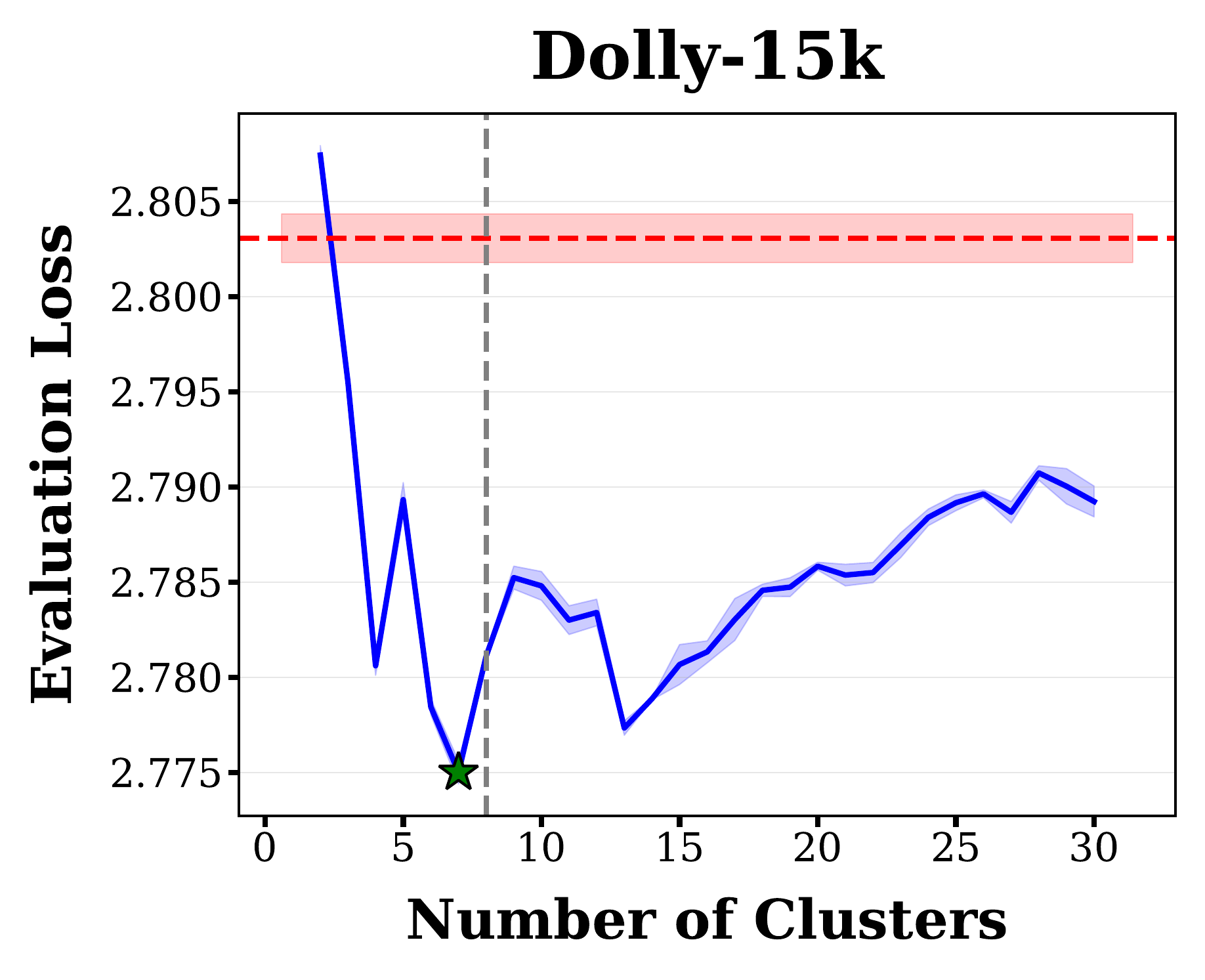}
        \label{fig:dolly}
    \end{subfigure}%
\caption{\small Instead of using pre-determined domains (e.g., by task type), we find that it is often better to first repartition the data into finer-grained, semantically related domains. Optimizing the proportions of these new semantic domains can significantly improve training performance. }
 \label{fig:diagram}
\end{figure*}


Theoretically, we derive insights to characterize the importance of semantic-based clustering in data mixing strategies and the optimality of \textsf{R\&B}.
Empirically, we validate \textsf{R\&B} across five diverse data settings, encompassing natural-language, instruction-following, reasoning, and multimodal tasks. 
\textbf{\textsf{R\&B} only requires an additional 0.01\% compute overhead, which cuts computational FLOPs by more than 99\% relative to existing approaches, all while maintaining or improving model performance}. 

We summarize our contributions as follows:
\begin{enumerate}[noitemsep,topsep=0pt]
    \item 
    We establish that semantic-based categorizations of skills are superior to human-defined categories for foundation model data mixing algorithms (Section \ref{section:theory}).
    \item 
    We introduce \textsf{R\&B}, an efficient and theoretically justified two-stage framework that first repartitions data into semantically coherent clusters of skills, then dynamically reweights skill mixtures using their gradients (Sections \ref{section:r-and-b}, \ref{section:theory}, and \ref{section:method}).
    \item
    We theoretically and empirically demonstrate that \textsf{R\&B} scales effectively with increased skill counts (Section \ref{section:experiments}).
\end{enumerate}
\section{Related Work}
\label{section:rw}

Our work builds upon prior works that study how to effectively curate and compose training data. We summarize three directions: data mixing, data selection, and scaling laws for data mixing.

\noindent \textbf{Group-level Data Mixing.} 
%
%
Prior work falls into two categories: static methods, which learn domain proportions before training, and dynamic methods, which adjust them as training progresses. 
Among the former, \citet{fan_doge_2024} uses a smaller model as a proxy to find mixture weights based on each domain’s learning contribution, then applies  them for the full model.
\citet{xie_doremi_2023} optimizes worst-case loss with a reference and proxy model, treating the resulting weights as domain proportions. 
These methods, though efficient, often neglect interactions between domains.
Among the latter methods, \citet{chen_skill-it_2023} model cross-domain interactions with a pre-built skills graph to optimize data composition. 
Building on it, \citet{chen_aioli_2024} introduces an online method that estimates domain interactions using training history information.
However, as the domain count rises, these methods become  computationally expensive.

\noindent \textbf{Sample-level Data Selection.} 
%
%
These techniques represent a more granular approach to improve dataset quality by evaluating individual samples.
%
%
Several methods have been proposed, including using gradient similarity~\citep{engstrom_dsdm_2024, xia_less_2024, killamsetty_grad-match_2021, huang_evaluating_2025}, reward functions~\citep{wu_mixture--skills_2024}, object detection models~\citep{huang_multimodal_2024}, and embedding similarity metrics~\citep{xie_data_2023}.
A related line of work is through deduplication, which removes redundant or nearly identical examples using clustering~\citep{abbas_semdedup_2023, lee_deduplicating_2022, tirumala_d4_2023}.
Our work integrates these perspectives, repartitioning data into finer-grained groups while simultaneously optimizing domain mixtures.


 \noindent \textbf{Scaling Laws.} 
These works model and predict performance when training on diverse domains. 
\citet{ge_bimix_2024} develop a bivariate scaling law to jointly model domain proportion and data volume, while \citet{ye_data_2024} propose a composite exponential law that accounts for cross-domain interactions. 
\citet{liu_regmix_2024} approach the problem empirically by training small models with varying mixtures to fit a predictive regression model, \citet{kang_autoscale_2024} build on this to derive compute-optimal data mixtures, and \cite{roberts_compute_2025} points out that under a static data mix, knowledge and code skills have different compute-optima that can be aligned via data selection. 
\textsf{R\&B}'s dynamic allocation approach suggests the need for new scaling models that can capture the effects of adaptive data allocation strategies.
\section{R\&B: Regroup and Balance Data Mixtures}
\label{section:r-and-b}

We first provide some context and intuition. 
We will refer to skills and domains interchangeably throughout this paper. 
In data mixing, each data skill/domain is assigned a proportion weight from the probability simplex, and data is sampled according to this probability distribution. Formally, for $m$ skills, we sample data according to the distribution $\p = [p_1, \dots, p_m] \in \Delta^{m-1}$. We seek to answer two questions:

 \noindent \textbf{R1. How should we define domains for data mixing?} Given a dataset, we wish to group the data into $m$ partitions suitable for data mixing strategies. We define a mapping function $S: \cX \rightarrow \{1, 2, \dots, m\}$ which labels each data point $x$ to its corresponding partition index. For a given dataset $D$, we have that $D_i = \{x \in D : S(x) = i\}$, and $\bigcup_{i=1}^{m} D_{i} = D$.

Intuitively, one would like to slice the data to minimize noise within each partition and reduce overlap between partitions. If perfect separation is possible, each partition would correspond to a distinct skill or capability domain, allowing for more targeted optimization of mixing weights. On the other hand, if each data sample is i.i.d. assigned to a group, then we would not expect any data mixing strategy to be better than stratified sampling.

 \noindent \textbf{R2. How do we efficiently reweight domains?} Once $\Dtr$ has been partitioned into $m$ groups, our secondary objective is determining the optimal weight proportions $\p$ for each domain. Weights may change over time, so training is split into $T$ rounds. At the end of each round $t$, we can reweight the skills $\p^{t+1} = [p^{t+1}_1, \dots, p^{t+1}_m] \in \Delta^{m-1}$, and resume training according to the new proportions.

\subsection{Problem Setup}
\label{section:theory}

We formulate our framework as a bilevel optimization problem for minimizing test loss on a dataset. The lower-level optimization aims to find the best training proportions $\p^t \in \Delta^{m-1}$ for each training round $t \in 1,\ldots,T$. The upper-level problem seeks to find the best partitioning of a dataset $D$ into $m$ partitions $D_1, \dots, D_m$ where $D=\bigcup_{i=1}^m D_m$.
Let $\Devi = \{x \in \Dev : S(x) = i\}$ be the set of evaluation points assigned to skill $i$ by $S$. Let $f_{\theta_t}$ be the model parametrized by $\theta_t$ that is trained during round $t$, i.e. $f_{\theta_{t+1}}$ is obtained by training $f_{\theta_t}$ with proportions $\p^t$. 
Formally, we aim to solve the following problem
\begin{equation}
    \min_{m\in\mathbb{Z}_+} \min_{\p^1,\ldots,\p^T \in \Delta^{m-1}} \cL_{\text{eval}}(f_{\theta_{T+1}}),
    \label{eq:bilevel}
\end{equation}
where $\cL_{\text{eval}}(f_{\theta_{T+1}})$ is the average evaluation loss for the partition $\Devi$ after training model $f_{\theta_{T}}$ on mixture proportions $\p^T$ to obtain $f_{\theta_{T+1}}$. 


Solving the full bilevel problem (\ref{eq:bilevel}) is infeasible because for each candidate solution of the upper-level optimization problem, we must train a model for the lower-level optimization problem to obtain the loss. Thus, we propose decomposing Equation \ref{eq:bilevel} into two:
\begin{align}
S^*, m^* = \arg\min_{S, m} \, 
&\cL_{\text{clustering}}(S; f_{\text{Unif}}(D)), \label{upperlevel} \\
\min_{\p^1,\ldots,\p^t \in \Delta^{m^*-1}} & \cL_{\text{eval}}^*(f_{\theta_{T+1}}).\label{lowerlevel}
\end{align}
In (\ref{upperlevel}), we use $f_{\text{Unif}}$, which we denote as a model trained on fixed uniform proportions across skills to convergence (i.e. stratified sampling, $f_\text{Unif} = \argmin_f \cL_\text{eval}(f)$). This minimization is taken over a family of partitioning schemes $S$, such as the $S$ found through $k$-means, on the gradients of trained model $\cL_\text{eval}(f_\text{Unif})$. In (\ref{lowerlevel}), we use the optimal choice of $m^*$ and partitioning scheme $S^*$ found in the previous stage, and solve the optimization problem at every training round $t$.

\subsection{Defining Domains}
We investigate how to partition a given dataset to achieve optimal data mixing. 
Intuitively, data points that belong in a cluster should have a similar effect, i.e. gradients, during training.
If gradients are not aligned, then swapping points with another cluster would reduce noise in both clusters.
This leads to our first definition. 

\begin{definition}
    A skill-assigning function $S:D\rightarrow [m]$ is \underline{stable in the direction $\nLD{t}{\p}$} if for the skill $i = \argmax_{i \in [m]} \nLd{t}{i}^\top \nLD{t}{\p}$ and any other $j \in [m]$, exchanging a pair $x_i \in D_i$, $x_j \in D_j$ does not improve $\nLd{t}{i}^\top \nLD{t}{\p}$.
\end{definition}
In other words, a clustering is said to be stable if no swapping of points improves alignment with the evaluation gradient.
This definition provides a theoretical foundation for optimal data mixing, but it is still impractical to discover good groupings.
The following lemma characterizes the maximum regret from swapping points between clusters.
\begin{lemma}
    Define the regret $R_S(i, j)$ under the skill-assigning function $S$ for class $j$ as the difference between the gradient alignments:
    \[R_S(i, j) = \max_{\tilde D_i \subset D_i \cup D_j, |\tilde D_i| = |D_i|}\nLD{t}{\p}^\top \nabla \cL(\theta_t, \tilde D_i) - \nLD{t}{\p}^\top \nLd{t}{i}.\]
    Let $i, j \in |m|$ and assume $|D_i| = |D_j|$ and 
    $
    \nLd{t}{i}^\top \nLD{t}{\p} \geq $ $\nLd{t}{j}^\top \nLD{t}{\p}$, 
     and let $r_i = \max_{x \in D_i} |\nabla \cL(\theta_t, x)^\top \nLD{t}{\p} - \nLd{t}{i}^\top \nLD{t}{\p}|$. 
    Then we have
    \[R_S(i, j) \leq \max\left\{0, \frac{1}{2}(r_i + r_j - (\nLd{t}{i}^\top \nLD{t}{\p} - \nLd{t}{j}^\top \nLD{t}{\p}))\right\}.\]
\end{lemma}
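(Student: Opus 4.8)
The plan is to collapse the vector problem to a one-dimensional sorting problem by projecting every per-sample gradient onto the fixed direction $g := \nLD{t}{\p}$. Writing $a(x) := \nabla\cL(\theta_t,x)^\top g$ for the scalar alignment of a point, and using that a domain gradient is the sample average $\nLd{t}{i} = \tfrac{1}{|D_i|}\sum_{x\in D_i}\nabla\cL(\theta_t,x)$, the per-domain alignment $\bar a_i := \nLd{t}{i}^\top g$ is exactly the mean of the $a(x)$ over $D_i$, and the definition of $r_i$ becomes the pointwise bound $|a(x)-\bar a_i|\le r_i$ for every $x\in D_i$ (likewise $r_j$ on $D_j$). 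Since $\nabla\cL(\theta_t,\tilde D_i)^\top g$ is again an average of scalars, the inner maximization in $R_S(i,j)$ has a closed form: with $n:=|D_i|=|D_j|$, the optimizer $\tilde D_i^\star$ collects the $n$ points of largest $a(\cdot)$ from $D_i\cup D_j$. Let $s$ be how many of these come from $D_j$ (equivalently, the number of points dropped from $D_i$), and abbreviate $B:=(\bar a_j-\bar a_i)+r_i+r_j$, noting $\bar a_j-\bar a_i=-(\nLd{t}{i}^\top g-\nLd{t}{j}^\top g)\le 0$ by hypothesis. The goal becomes $R_S(i,j)=\Delta/n\le\max\{0,B/2\}$, where $\Delta:=\sum_{x\in\tilde D_i^\star}a(x)-\sum_{x\in D_i}a(x)$ is the increase in total alignment.

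Next I would establish two complementary upper bounds on $\Delta$. The first inspects what is exchanged: the $s$ points brought in from $D_j$ each satisfy $a(x)\le\bar a_j+r_j$ and the $s$ points removed from $D_i$ each satisfy $a(x)\ge\bar a_i-r_i$, so $\Delta\le s\big[(\bar a_j+r_j)-(\bar a_i-r_i)\big]=sB$. The second inspects what is retained: since the top-$s$ sum of $D_j$ equals $n\bar a_j-U$, where $U$ is the sum over the $n-s$ discarded $D_j$-points, we get the identity $\Delta=n(\bar a_j-\bar a_i)+(K-U)$ with $K$ the sum over the $n-s$ retained $D_i$-points; combining $K\le (n-s)(\bar a_i+r_i)$ and $U\ge (n-s)(\bar a_j-r_j)$ gives $\Delta\le s(\bar a_j-\bar a_i)+(n-s)(r_i+r_j)$. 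The symmetry between the coefficients $s$ and $n-s$ multiplying $(r_i+r_j)$ in the two bounds is precisely what produces the constant $1/2$.

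Finally I would combine the two bounds by a short case analysis on $s$ relative to $n/2$. If $B<0$, the first bound gives $\Delta\le sB\le 0$ for every feasible choice while $\tilde D_i=D_i$ attains $\Delta=0$, so $R_S(i,j)=0=\max\{0,B/2\}$. If $B\ge 0$ and $s\le n/2$, the first bound gives $\Delta\le sB\le \tfrac n2 B$. If $B\ge 0$ and $s> n/2$, the second bound gives $\Delta\le s(\bar a_j-\bar a_i)+(n-s)(r_i+r_j)\le \tfrac n2(\bar a_j-\bar a_i)+\tfrac n2(r_i+r_j)=\tfrac n2 B$, where I bound the first term using $\bar a_j-\bar a_i\le 0$ with $s>n/2$ and the second using $r_i+r_j\ge 0$ with $n-s<n/2$. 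Dividing by $n$ yields $R_S(i,j)\le\max\{0,B/2\}$ in every case, which is the claim.

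The main obstacle is the factor $1/2$: the single ``exchange-side'' bound $\Delta\le sB$ only yields $R_S(i,j)\le (s/n)B$, which is weaker than $B/2$ whenever more than half the cluster is swapped. Overcoming this requires the ``retention-side'' bound and the observation that $\min(s,n-s)\le n/2$, so one of the two bounds is always strong enough. The delicate point to verify is that the extremal inequalities $a(x)\le\bar a_j+r_j$ and $a(x)\ge\bar a_i-r_i$ are applied to the correct top/bottom order statistics, since it is exactly that ordering which makes both bounds simultaneously valid.
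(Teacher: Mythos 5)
Your proof is correct, and its first half coincides with the paper's own: both project every per-sample gradient onto the fixed direction $\nLD{t}{\p}$ to reduce the problem to one dimension, and both observe that the maximizing $\tilde D_i$ consists of the top order statistics of the pooled set $D_i \cup D_j$, so the regret is governed by exchanging the largest projected values of $D_j$ for the smallest of $D_i$. The genuine difference is in how the constant $1/2$ is extracted. The paper asserts, without justification, that the regret is maximized by the bimodal configuration in which half the points of each cluster sit at the mean plus or minus the radius, and then simply evaluates the regret in that configuration; verifying that no other configuration consistent with the mean and radius constraints does better is precisely the nontrivial step, and the paper skips it. Your argument replaces that assertion with two complementary inequalities --- the exchange bound $\Delta \le sB$ (tight when few points are swapped) and the retention bound $\Delta \le s(\bar a_j - \bar a_i) + (n-s)(r_i + r_j)$ (tight when many are) --- closed by the observation that $\min(s, n-s) \le n/2$, so every feasible configuration and every swap size $s$ is covered. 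What the paper's version buys is brevity and an explicit worst case exhibiting tightness of the bound; what yours buys is rigor at exactly the step where the paper's proof is weakest, namely where the factor $1/2$ appears. The one thing to state explicitly in a final write-up is the structural fact both arguments rely on: the optimal $\tilde D_i$ splits into the top $s$ values of $D_j$ and the top $n-s$ values of $D_i$, which is what licenses applying the bounds $a(x) \le \bar a_j + r_j$ and $a(x) \ge \bar a_i - r_i$ to the correct incoming and outgoing points.
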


\noindent \textbf{Proof:} See Appendix \ref{theory:clustering}.

%

While $R_S(i, j)$ is still dependent on the direction of the evaluation gradient $\nLD{t}{\p}$, this result shows that for a clustering that assigns each class to have a small radius in every direction (an upper bound on $r_i$) and a large separation between their means, in many directions mostly orthogonal to their difference vector, $R_S(i, j)$ is 0. \textbf{This bound reveals additional structure that enables us to determine an effective clustering: clusters should have minimal radii while maintaining sufficient separation between their centroids}. This is equivalent to $S$ being stable, a property which R\&B uses to achieve optimality as compared to using other clusterings (see Section \ref{section:optprops} and note that $\nLd{t}{i}^\top \nLD{t}{\p}$ should be maximized).

\begin{figure}[htb]
    \centering
    \begin{subfigure}[b]{\textwidth}
       \centering
        \includegraphics[width=0.8\textwidth]{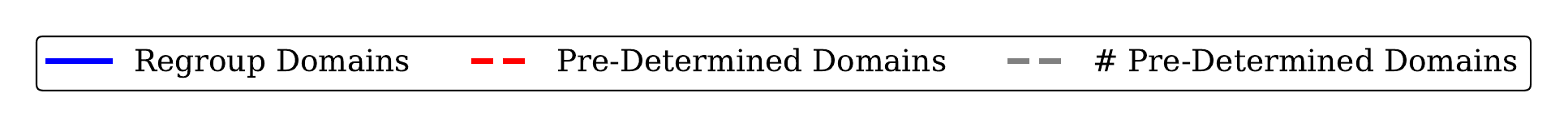}
    \end{subfigure}
    \vskip -0.1in
    \begin{subfigure}[b]{0.24\textwidth}
        \centering
        \includegraphics[width=\textwidth]{figure/dolly_eval_loss.pdf}
        \label{fig:dolly}
    \end{subfigure}%
    \hfill
    \begin{subfigure}[b]{0.24\textwidth}
        \centering
        \includegraphics[width=\textwidth]{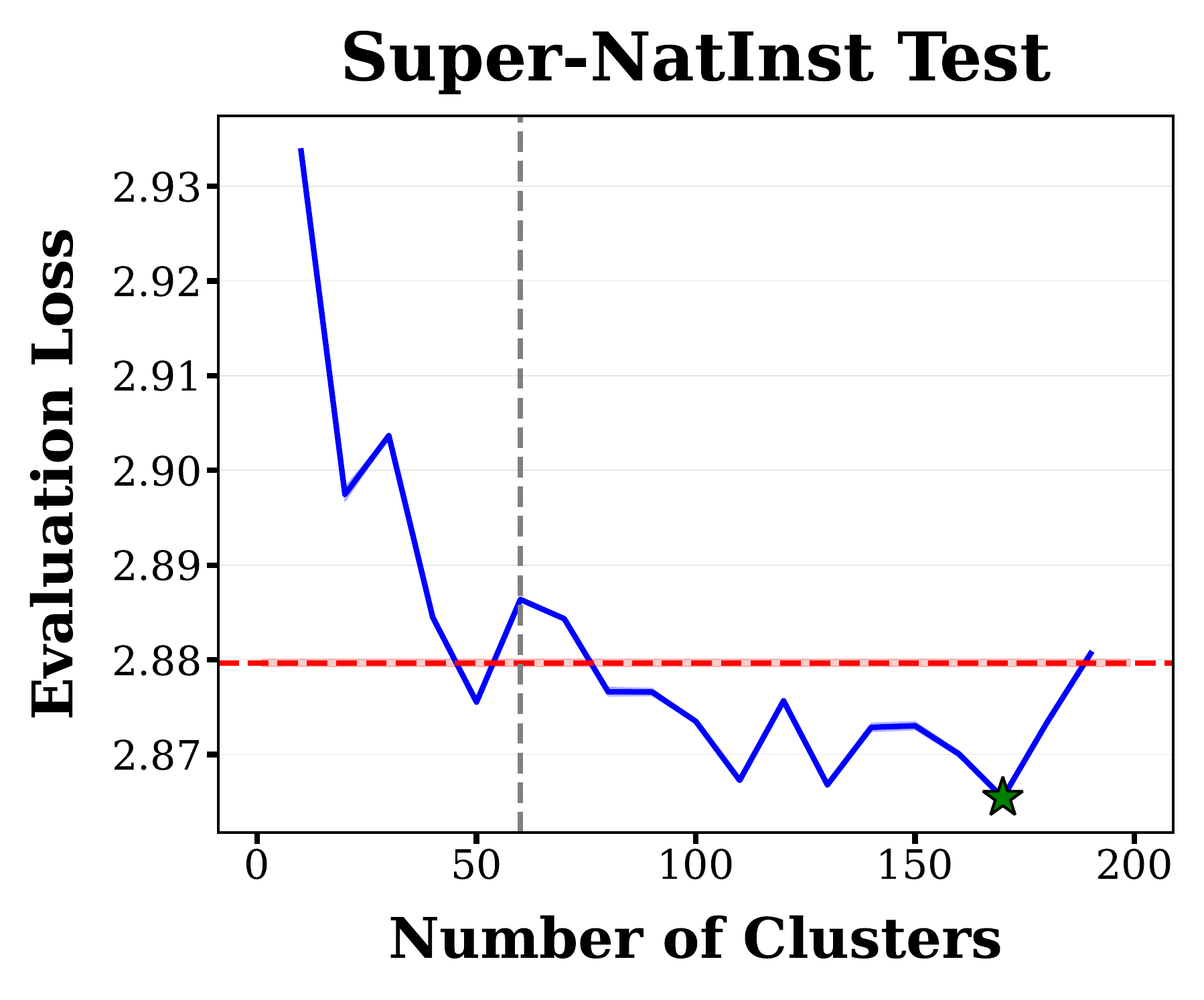}
        \label{fig:ni-ood}
    \end{subfigure}%
    \hfill
    \begin{subfigure}[b]{0.24\textwidth}
        \centering
        \includegraphics[width=\textwidth]{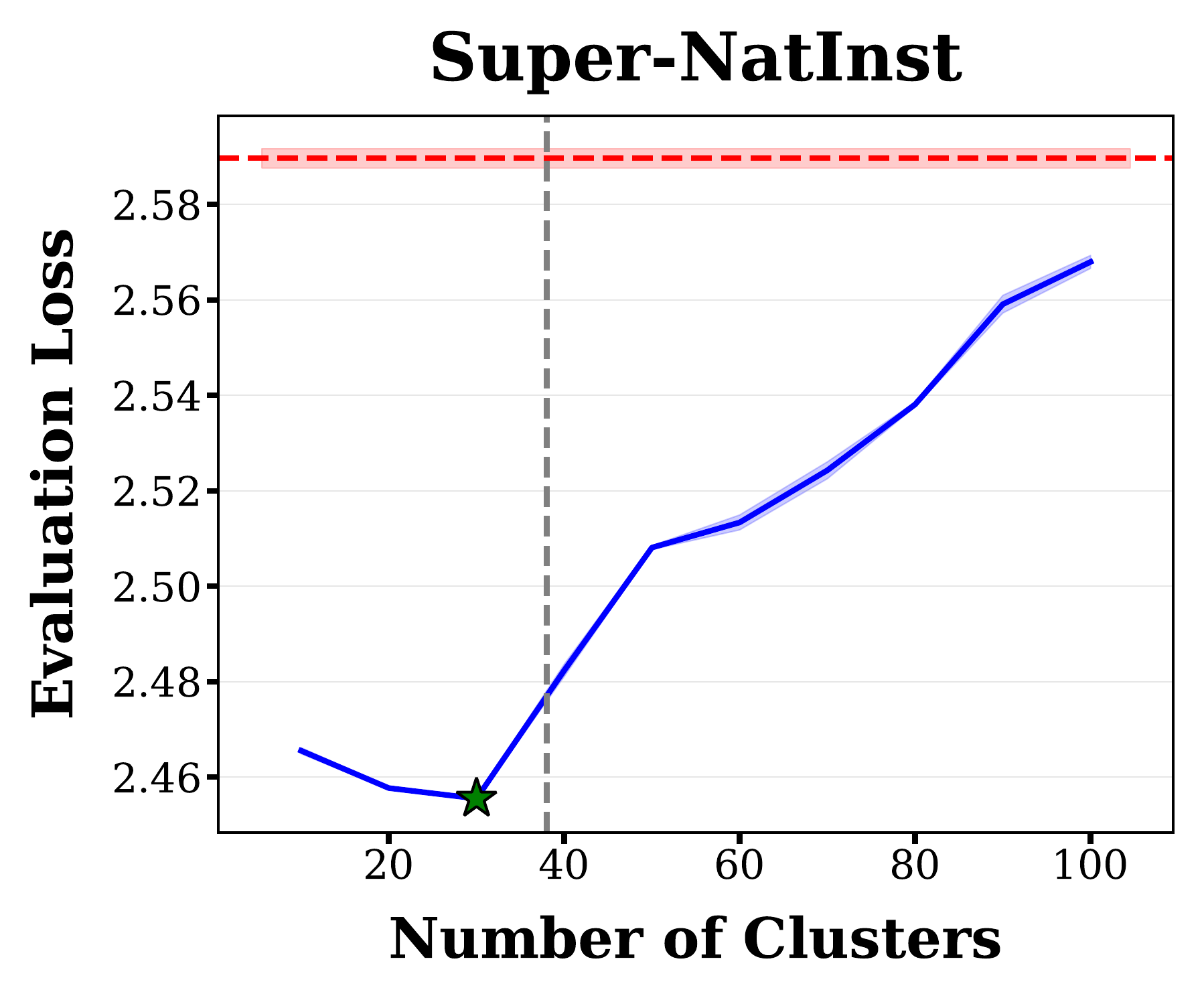}
        \label{fig:ni38}
    \end{subfigure}%
    \hfill
    \begin{subfigure}[b]{0.24\textwidth}
        \centering
        \includegraphics[width=\textwidth]{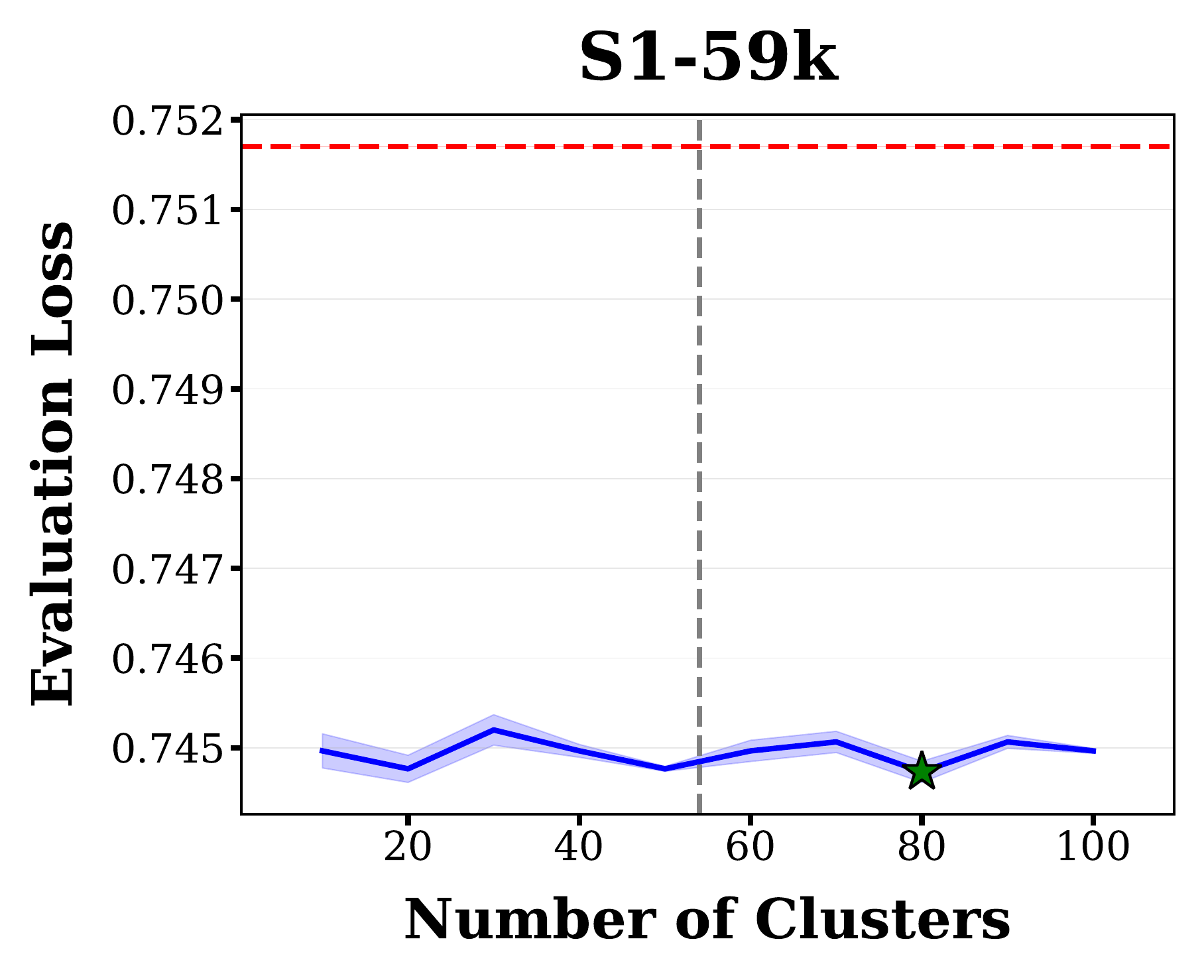}
        \label{fig:s1-59k}
    \end{subfigure}
    \vskip -0.15in
    \begin{subfigure}[b]{0.24\textwidth}
        \centering
        \includegraphics[width=\textwidth]
        {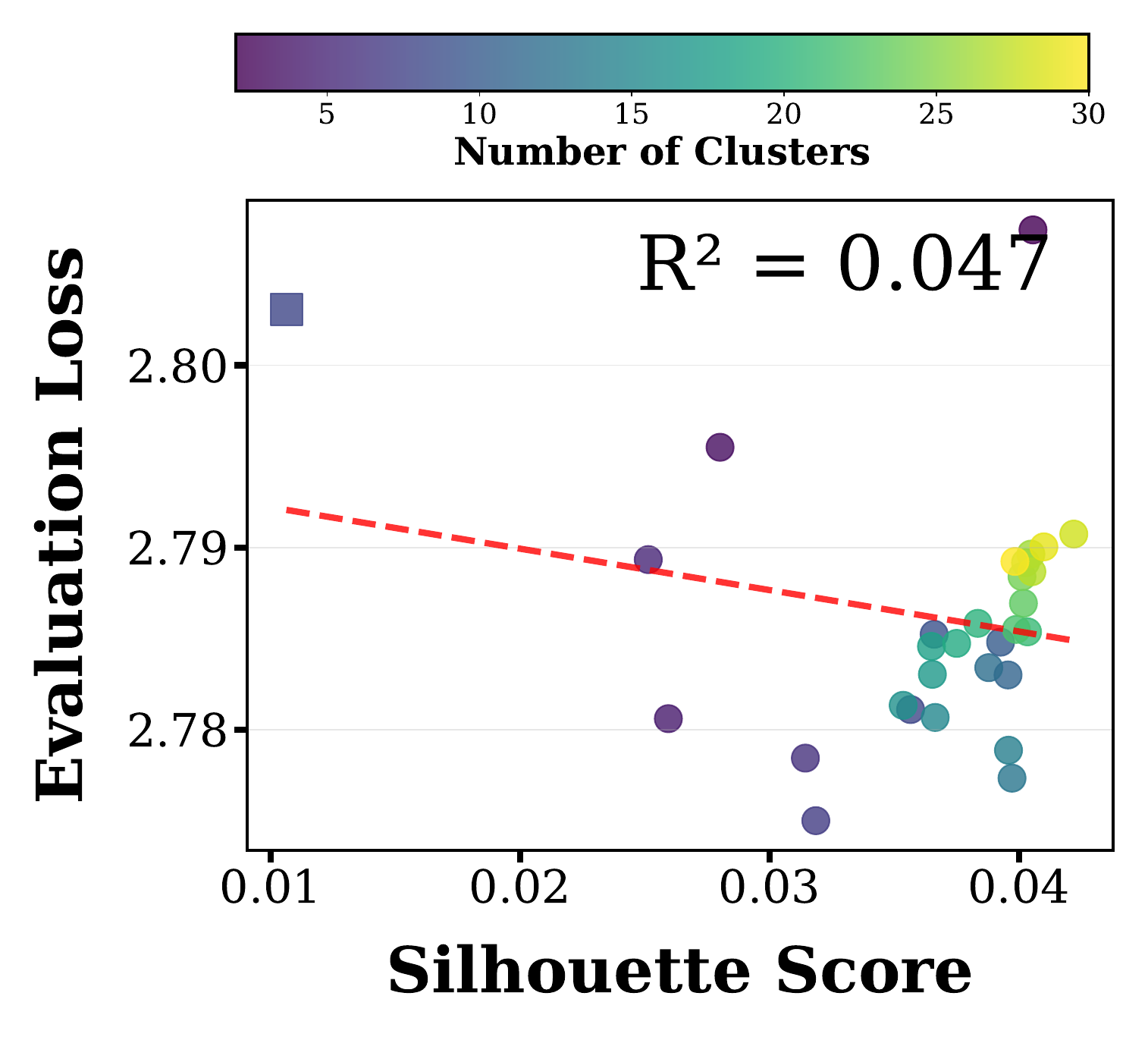}
        \label{fig:s1-59k}
    \end{subfigure}
    \begin{subfigure}[b]{0.24\textwidth}
        \centering
        \includegraphics[width=\textwidth]
        {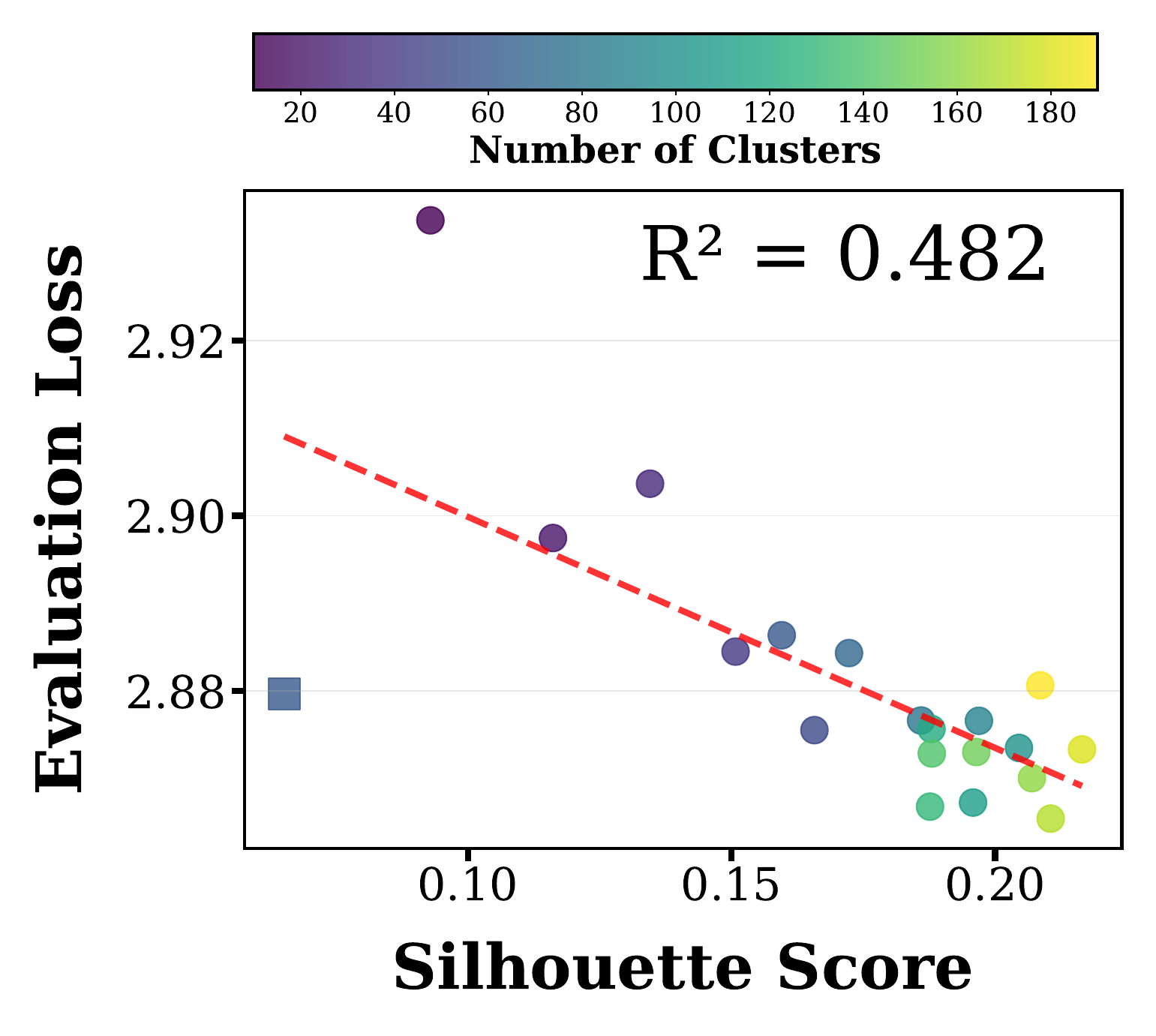}
        \label{fig:s1-59k}
    \end{subfigure}
    \begin{subfigure}[b]{0.24\textwidth}
        \centering
        \includegraphics[width=\textwidth]
        {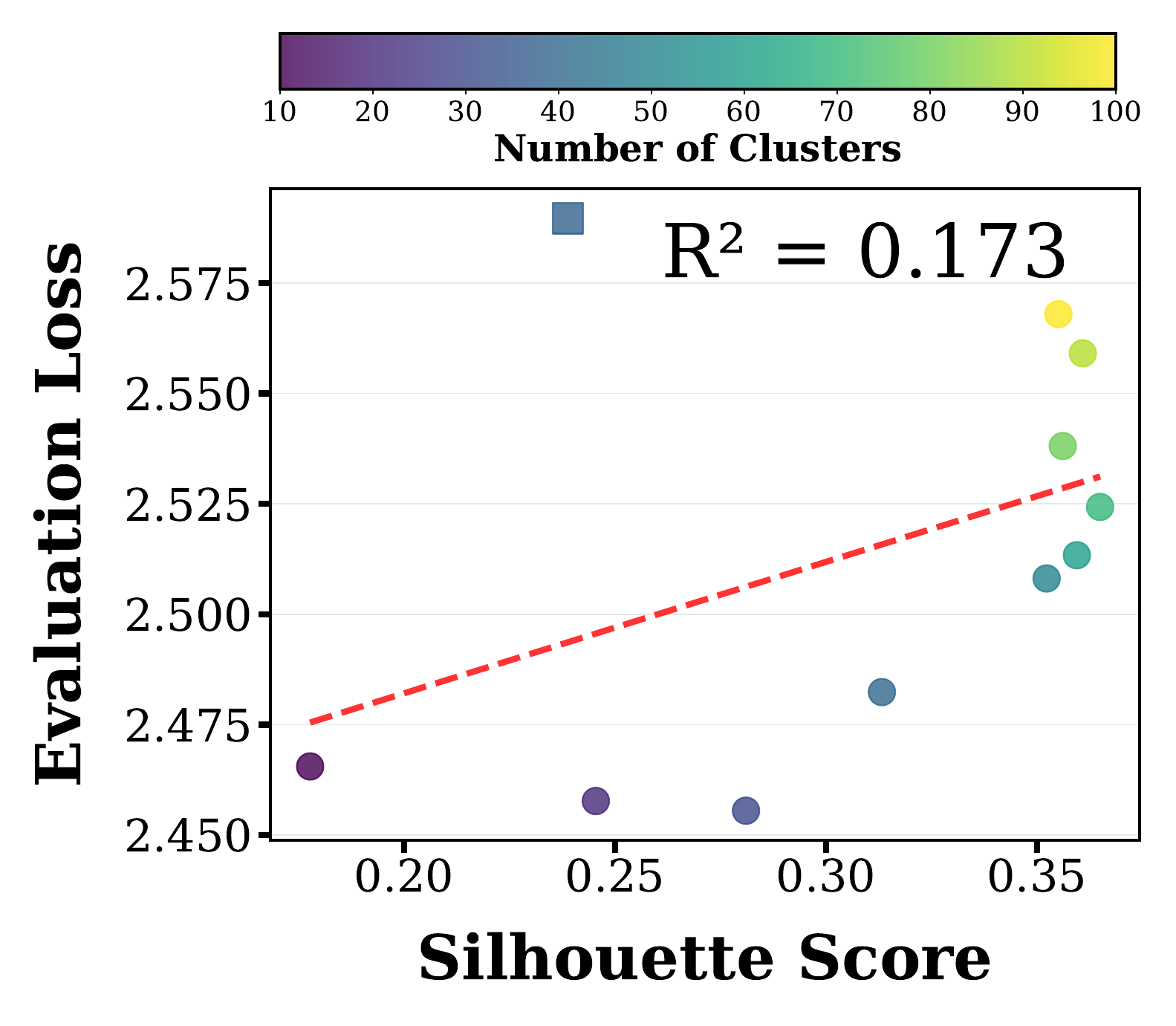}
        \label{fig:s1-59k}
    \end{subfigure}
    \begin{subfigure}[b]{0.24\textwidth}
        \centering
        \includegraphics[width=\textwidth]
        {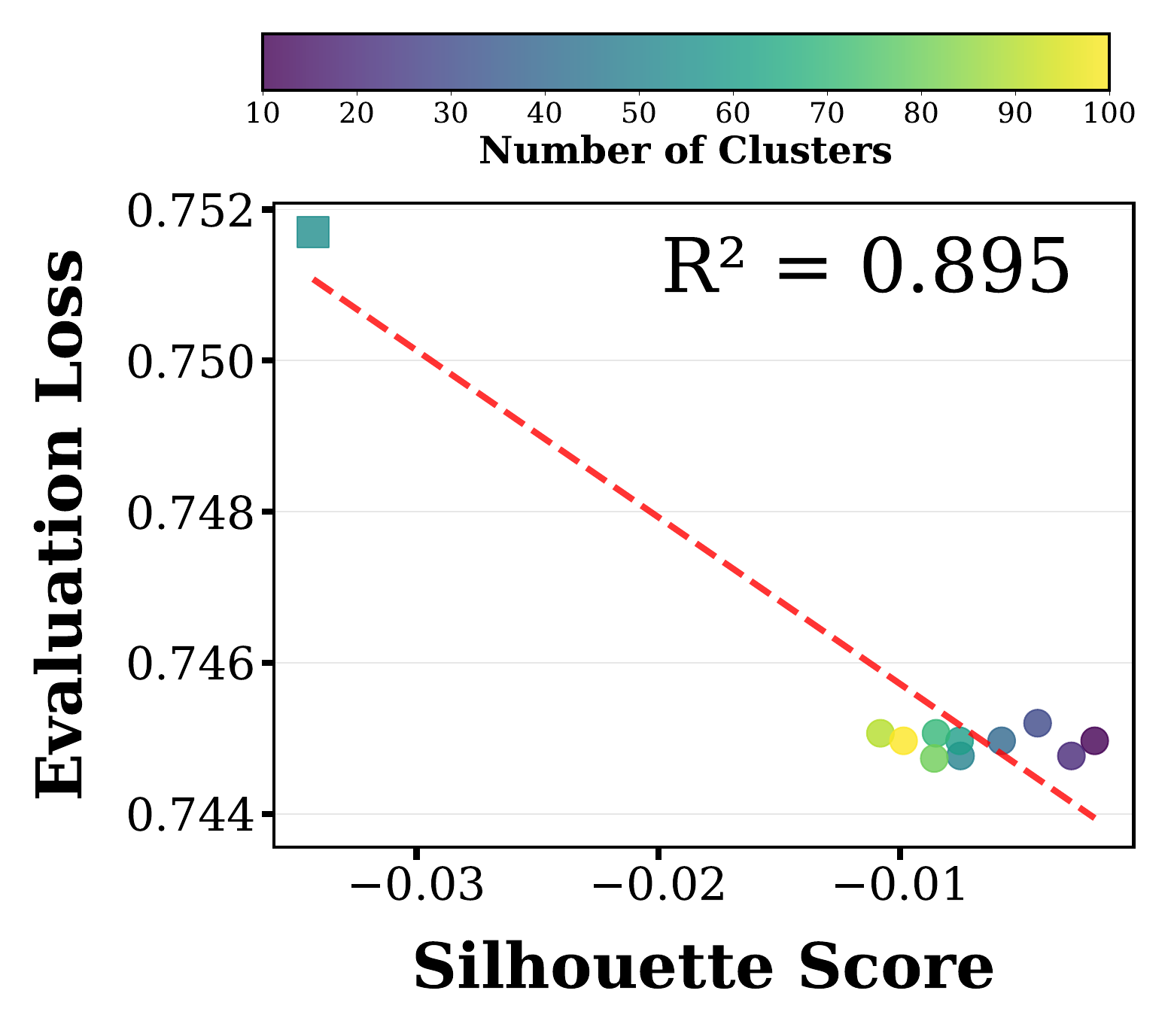}
        \label{fig:s1-59k}
    \end{subfigure}
    \vspace{-12pt}
    \caption{\textit{Top Row}: Across various data settings, we find that there is a ``sweet spot'' in the number of domains used for data mixing, indicated by the green star. The optimal number of groups varies significantly with the dataset, which motivates the need for compute-efficient data mixing. \textit{Bottom Row}: We find that silhouette score often correlates with model performance, suggesting that it is possible to predict data mixing performance based on clustering metrics. 
    \label{fig:combined}
   }
\end{figure}

In light of these theoretical findings, we seek to empirically validate our claims by clustering real data and train using fixed proportions $\p$. Our hypothesis is that well-clustered data can result in better overall training performance. 
In practice, there are many choices for the skill-assigning function $S$ and the number of skills. To keep our investigation tractable, we focus on $k$-means clustering, and sweep over $k$. We first embed our examples with ModernBERT-embed \citep{nussbaum_nomic_2024}, a state-of-the-art embedding model that supports long-context inputs. Then, we apply k-means and train a model using a uniform proportion of $k$ clusters. We evaluate our setup across four settings: Dolly-15k \citep{conover_free_2023}, Super Natural Instructions (Super-NatInst) \& Super Natural Instruction Test (Super-NatInst Test) \citep{wang_super-naturalinstructions_2022}, and S1-Reasoning \citep{muennighoff_s1_2025}, and across 3 seeds. Appendix \ref{appendix:experimental_details} lists the full experimental details.

The top row of Figure \ref{fig:combined} shows that \textbf{training on the resulting clusters often results in significantly better performance} compared to pre-determined partitions. On 3 of the 4 datasets, there is a U-shaped pattern in the number of clusters versus evaluation loss. Thus in many cases, there is an optimal choice of $k$---but it varies significantly between datasets. 

Are these optimal clusters compact and well-separated, as our theory suggests? We find that generally, the answer is yes. We plot the silhouette score \citep{rousseeuw_silhouettes_1987} of each cluster group against the final evaluation loss of the model. The bottom row of Figure \ref{fig:combined} shows that on 3 of the 4 datasets, there is moderate to strong correlation between the clusters' silhouette score and model performance. \textbf{These results validate our theoretical insights that clusters which are well-separated result in better data mixing performance}. Furthermore, this suggests that it is possible to choose the optimal $k$ without training a model, which would lead to further cost savings.






\subsection{Proposed Method}
\label{section:method}

The R\&B algorithm (Algorithm~\ref{alg:randb}) performs adaptive data selection for efficient model training on partitioned datasets. Starting with a uniform sampling distribution across clusters, it iteratively refines this distribution to focus computational resources on the most relevant partitions.

\begin{algorithm}
\small
\caption{R\&B: Online Domain Data Selection}
\label{alg:scramble}
\begin{algorithmic}[1]
\State \textbf{Input:} Domain datasets $\{\Dtri^*, \Devi^*\}_{i=1}^{m^*}$, model $\theta \in \sR^n$, training rounds $T$, steps per round $K$, evaluation weights $\p \in \sR^{m^*}$
\State Initialize sampling distribution $\p^0 \gets \text{Uniform}(m^*)$
\For{$t = 0$ to $T-1$}
    \State Initialize gradient accumulators $\nLd{}{i} \gets \mathbf{0}_n$ and sample sets $\cS_i \gets \emptyset$ for all $i$
    \For{$k = 0$ to $K-1$}
        \State Sample mini-batch $\mathcal{B}$ from $\Dtri^*$ using domain probabilities $\p^t$
        \State Update model: $\theta \gets \theta - \eta \nabla\cL(\theta; \mathcal{B})$
        \For{each domain $i$ present in $\mathcal{B}$}
            \State Accumulate final-layer gradients: $\nLd{}{i} \mathrel{+}= \nabla \cL(\theta; \mathcal{B} \cap \Dtri^*)$
            \State Add samples to set: $\cS_i \gets \cS_i \cup (\mathcal{B} \cap \Dtri^*)$
        \EndFor
    \EndFor
    \State Compute similarity matrix $G$ with $G_{ij} = \frac{1}{|\cS_i||\cS_j|} \nLd{}{i}^\top \nLd{}{j}$
    \State Update distribution: $\p^{t+1} \gets \text{softmax}(\lambda G\p / \|G\p\|_2)$
\EndFor
\State \textbf{Return:} Final model parameters $\theta$
\end{algorithmic}
\label{alg:randb}
\end{algorithm}

During each training round, R\&B accumulates final-layer gradients from sampled batches and tracks which clusters contribute to model updates. Then it constructs a gradient similarity matrix that captures how gradients from different clusters relate to each other. This similarity information is combined with predefined evaluation proportions to produce an updated sampling distribution through a softmax operation. As training progresses, the algorithm adaptively shifts sampling probability toward clusters that contain the most valuable training examples, improving efficiency while maintaining performance across all partitions.

The key innovation of R\&B lies in its use of gradient information to dynamically adjust sampling priorities, enabling models to learn effectively from heterogeneous data \emph{without requiring extensive tuning for each data partition}.
\subsection{Determining Optimal Proportions}

\label{section:optprops}
Starting from the objective in (\ref{lowerlevel}), we greedily aim to find the best $\p^t$ such that $f_{\theta_{t+1}}$ has the greatest decrease in loss possible for this iteration of gradient descent. Specifically, $\theta_{t+1}$ is taken to be the SGD update of the current $\theta_t$, where $\theta_{t+1}(\p^t) = \theta_t - \eta \nLD{t}{\p^t}$. Here, we have control over the weighting of the gradients $\p^t$ in this gradient descent step. This gradient term is the weighted sum of the gradients per skill: $\nLD{t}{\p^t} = \sum_i (\p^t)_i \nLD{t}{i}$. Now, assume that $\nabla \cL$ is well approximated by its first order Taylor expansion (for some exposition on how this method behaves under a Lipschitz loss, see Appendix \ref{app:derivation}). In this case, the optimization objective for finding the best mixture weights becomes $\p^t =$
\begin{align*}
     \argmin_{\p' \in \Delta^{m-1}} \LD{t}{\p} - \nLD{t}{\p}^\top \nLD{t}{\p'} 
    = \argmax_{\p' \in \Delta^{m-1}} \sum_{i=1}^m (\p')_i \nLD{t}{\p}^\top \nLD{t}{i}.
\end{align*}
This objective is simply a linear objective over the simplex, which will be maximized by one of the corners of the simplex; this maximal corner will correspond with the skill that aligns most with the target loss averaged over skills with distribution $\p$. 


This optimum, however, will be highly discontinuous and will only ever be able to sample from a single skill per descent step. This creates two major issues: a highly variable sampling scheme may cause highly variable and unpredictable behavior in training, and the per-gradient sampling scheme required for our method requires samples from each skill within the training batch. We address this by adding cross entropy regularization, which prevents extreme mixing proportions and aligns better with scaling law results \citep{ye_data_2024}. The regularized solution remains tractable:
\[\p^t = \frac{1}{Z}\softmax\left(\frac{\lambda}{\|G\p\|_2} G \p\right),\]
where $Z$ is a normalization constant, $\lambda$ is a hyperparameter, and $G_{ij} = \nLD{t}{i}^\top \nLD{t}{j}$. See Appendix \ref{app:derivation} for derivation details.

 \noindent \textbf{Comparison to Multiplicative Weights and DoGE.} Other works (\cite{fan_doge_2024}, \cite{chen_skill-it_2023}) use an update rule based on multiplicative weights. The update in DoGE \citep{fan_doge_2024} (which is most similar to ours) is written in our notation as:
\[\textstyle \p_t' = {1}/{Z}(\p_{t-1}' \exp(\eta W_t / \mu)) = {1}/{Z}(\p'_{i-1}\exp(\eta G_t \p / \mu))  = 1/Z \left(\exp\left(\eta \sum_{i=1}^t G_i \p / \mu\right)\right),\]
where $\mu = 1/\lambda$ and $W_t=G \p$ by the definitions of $G$ and $\p$. Note that $G_i\p$ is aggregated up to round $t$.
In contrast, our method aggregates $G_i \p$ for $K$ training iterations before updating $\p'$, and then overwrites the mixture weights to be those of the previous window. 
This prevents the diminishing influence of later updates and gradient scale reduction that occurs in multiplicative weights approaches.
An update that refreshes the proportions at every window will circumvent the strong bias towards the optimal weights remaining similar, as was also used in \cite{chen_skill-it_2023}.

\section{Experiments}
\label{section:experiments}

We study the effectiveness of R\&B empirically across a diverse range of datasets and tasks. Our experiments aim to validate the following claims:
\begin{enumerate}[noitemsep, topsep=0pt, label={\bfseries C\arabic*.}]
    \item R\&B can \emph{match or improve training performance} on natural language tasks while \emph{significantly reducing computational overhead} compared to existing methods. 
    \item R\&B can improve training performance \emph{beyond natural language modalities}.
\end{enumerate}

\subsection{Data Mixing on Natural Language Tasks}

\begin{table}
\small  
\centering
\renewcommand{\arraystretch}{1.3} 
\setlength{\tabcolsep}{4pt} 
\begin{tabular}{c|cc|cc|cc}
\toprule
\rowcolor{gray!15}
& \multicolumn{2}{c|}{\textsc{Sup-NatInst}} & \multicolumn{2}{c|}{\textsc{Sup-NatInst} test} & \multicolumn{2}{c}{Dolly-15k} \\
\rowcolor{gray!15}
& \multicolumn{2}{c|}{($m=38$)} & \multicolumn{2}{c|}{($m=60$)} & \multicolumn{2}{c}{($m=8$)} \\
\rowcolor{gray!15}
\textbf{Method} & Loss & \begin{tabular}[c]{@{}c@{}} \% Overhead ($\downarrow$) \end{tabular} & Loss & \begin{tabular}[c]{@{}c@{}} \% Overhead ($\downarrow$) \end{tabular} & Loss & \begin{tabular}[c]{@{}c@{}}\% Overhead ($\downarrow$) \end{tabular} \\
\midrule
\rowcolor{white}
Stratified  & 2.591 & 0 & 2.877 & 0 & 2.788 & 0 \\
\rowcolor{gray!15}
Skill-It  & 2.632 & 595.5 & 2.911 & $6 \times 10^7$ & 2.786 & 14.46 \\
Aioli & 2.622 & 1336.5 & 2.883 & $7 \times 10^6$ & 2.779 & 62.5 \\
\rowcolor{gray!15}
DGA & 2.591 & 1.723 & 2.893 & 1601 & 2.787 & 0.41 \\
\midrule
\rowcolor{white} & \multicolumn{2}{c|}{($m^*=30$)} & \multicolumn{2}{c|}{($m^*=100$)} & \multicolumn{2}{c}{($m^*=7$)} \\
\rowcolor{white}
\multirow{-2}{*}{R\&B} & \cellcolor{brown!15}\textbf{2.381} & \cellcolor{brown!15}\textbf{0.009} & \cellcolor{brown!15}\textbf{2.859} & \cellcolor{brown!15}\textbf{0.1} & \cellcolor{brown!15}\textbf{2.765} & \cellcolor{brown!15}\textbf{0.0006} \\
\end{tabular}%
\caption{Across three datasets, R\&B \textbf{significantly reduces} the compute overhead for evaluation compared to existing methods, while \textbf{matching or exceeding performance}.}
\label{tab:randb}
\end{table}

 \noindent \textbf{Setup.} We compare R\&B against four existing baselines: stratified sampling, Skill-It \citep{chen_skill-it_2023}, Aioli \citep{chen_aioli_2024}, and DGA \citep{fan_dynamic_2024}. 
We compare each method across three distinct three natural-language data settings: Dolly-15k \citep{conover_free_2023}, NaturalInstructions In-domain \textsc{Sup-NatInst} \citep{wang_super-naturalinstructions_2022}, and NaturalInstructions Test \textsc{Sup-NatInst}. 
For all experiments, we train 125M GPT-Neo models \citep{black_gpt-neo_2021}; full experimental details are listed in Appendix \ref{appendix:experimental_details}. 
We report the final evaluation loss and the relative compute overhead (over standard training) incurred from re-estimating proportions. The formulas for the relative compute overhead are derived in Appendix \ref{appendix:compute_cost}.
%


\noindent \textbf{Results.} Table \ref{tab:randb} shows R\&B's strong performance across all datasets. On \textsc{Sup-NatInst}, R\&B achieves the best performance (loss: 2.381) with minimal computational overhead (0.009\%). On \textsc{Sup-NatInst} test, R\&B outperforms all methods (loss: 2.859) with only 0.1\% overhead versus Skill-It ($6 \times 10^7$\%) and Aioli (($7 \times 10^6$\%). On Dolly-15k, R\&B performs competitively (loss: 2.765) compared to Aioli (2.779), with significantly smaller overhead (0.0006\% versus 62.5\%). As expected from C1, \textbf{R\&B consistently delivers strong results with orders of magnitude better computational efficiency} than other data mixing approaches.

\noindent \textbf{Ablations.} Fig.~ \ref{fig:randb-expand} ablates semantic regrouping across data mixing strategies. In \textsc{Sup-NatInst}, \textbf{regrouping improves most methods} (5.3-8.1\% gains) except Skill-It. On \textsc{Sup-NatInst} test, regrouping yields modest improvements for all methods except Aioli. Dolly-15k shows strong improvements with regrouping across all methods. While in many cases regrouping does help, it is not universally beneficial as evidenced by Skill-It's performance degradation on \textsc{Sup-NatInst} and Aioli's slight decline on \textsc{Sup-NatInst} test. Notably, the \underline{B}alance method combined with regrouping achieves the best overall performance on both \textsc{Sup-NatInst} datasets, while Aioli with regrouping performs best on Dolly-15k.

Even without regrouping, \textbf{our gradient-based method shows strong performance while maintaining minimal overhead}. On original \textsc{Sup-NatInst}, Balance achieves a loss of 2.520, significantly outperforming other data-mixing methods. On Dolly-15k Original, it reaches a competitive 2.783 loss. We omit Balance results for original \textsc{Sup-NatInst} test since our method requires training and validation data to share the same $m$ groups—a limitation easily addressed by re-mapping validation points to corresponding training skills.

The convergence plot in Figure \ref{fig:randb-expand} demonstrates R\&B's efficiency. R\&B reaches convergence with only 20\% of the training steps needed by other methods while achieving lower final loss values. Its smooth, monotonic descent contrasts with methods like Aioli, suggesting R\&B's gradient-based domain weighting enables more consistent optimization.


\begin{figure}[htbp]
    \centering
    \small
    \begin{minipage}{0.5\textwidth}
        \centering
        \renewcommand{\arraystretch}{1.2} 
        \setlength{\tabcolsep}{1.5 pt} 
        \begin{tabular}{c|cc|cc|cc}
        \toprule
        \rowcolor{gray!15}
        \textbf{Method} & \multicolumn{2}{c|}{\textsc{Sup-NatInst}} & \multicolumn{2}{c|}{\textsc{Sup-NatInst} test} & \multicolumn{2}{c}{Dolly-15k} \\
        \rowcolor{gray!15}
         & Original & \underline{R}egroup & Original & \underline{R}egroup & Original & \underline{R}egroup \\
        \midrule
        \rowcolor{white}
        $m$ & $38$ & $30$ & $60$ & $100$ & $8$ & $7$ \\
        \midrule
        \rowcolor{gray!15}
        Stratified  & 2.591 & \underline{2.454} & 2.877 & \underline{2.871} & 2.788 & \underline{2.761} \\
        \rowcolor{white}
        Skill-It  & 2.632 & 2.812 & 2.911 & \underline{2.881} & 2.786 & \underline{2.778} \\
        \rowcolor{gray!15}
        Aioli & 2.622 & \underline{2.488} & 2.883 & 2.947 & 2.779 & \cellcolor{brown!15}\textbf{\underline{2.760}} \\
        \rowcolor{white}
        DGA & 2.591 & \underline{2.453} & 2.893 & \underline{2.871} & 2.787 & \underline{2.761} \\
        \rowcolor{gray!15}
        \underline{B}alance & 2.520 & \cellcolor{brown!15}\textbf{2.381} & - & \cellcolor{brown!15}\textbf{2.859} & 2.783 & 2.765 \\
        \end{tabular}%
    \end{minipage}%
    \hfill
    \begin{minipage}{0.40\textwidth}
        \centering
        \includegraphics[width=\linewidth]{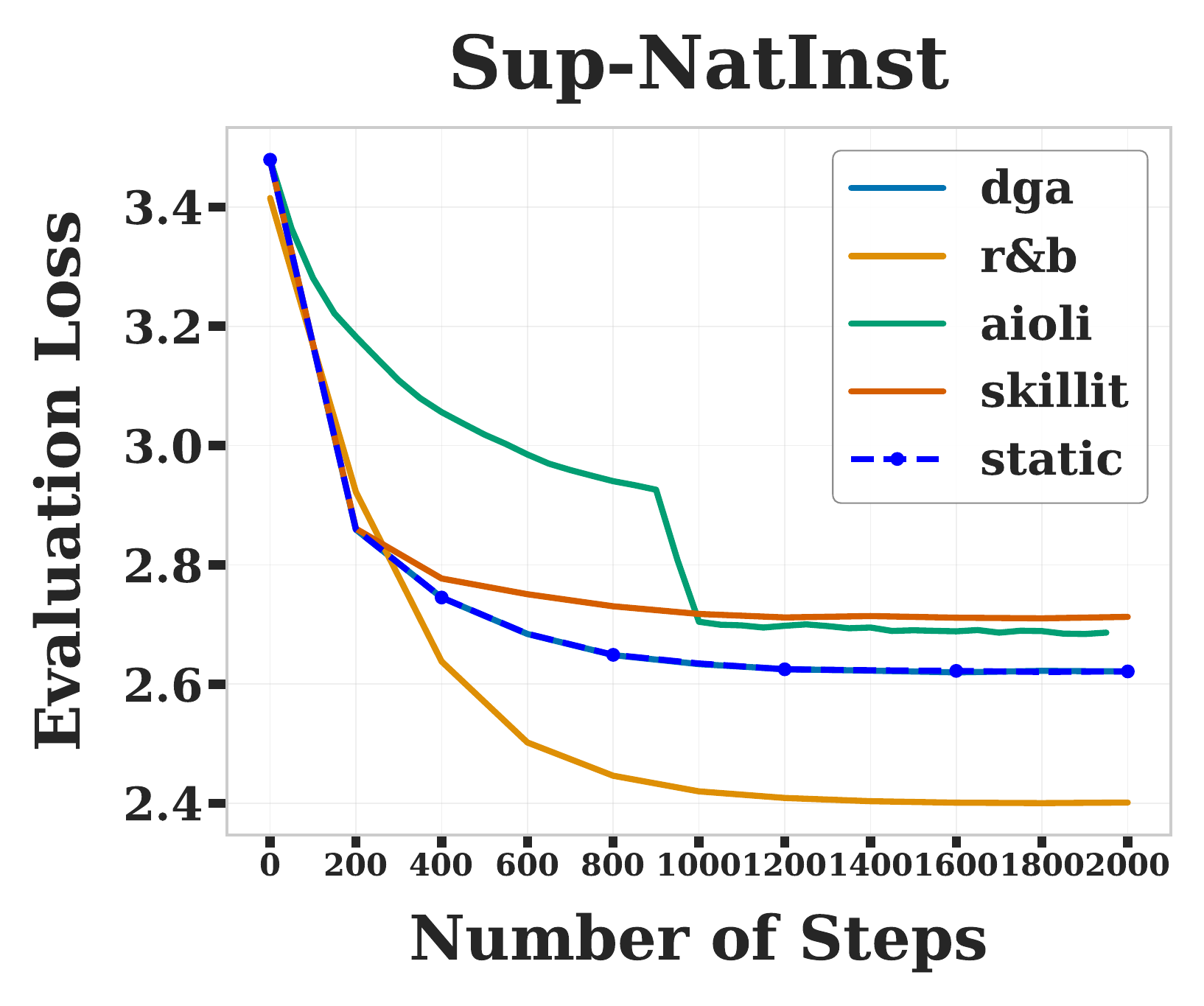}
    \end{minipage}
    \caption{Left: Regrouping skills before applying data mixing strategies can yield substantial improvements. \underline{Underlined} values indicate where regrouping beats the original grouping for that method and dataset. \textbf{Highlighted} values (with brown background) indicate the best overall performance for each dataset. Note that we do not apply \underline{B}alance to the original categorization of \textsc{Sup-NatInst} test, as we assume that training data and validation data are bucketed into the same $m$ groups. Right: Loss curve on the Sup-NatInst dataset.}
    \label{fig:randb-expand}
\end{figure}

\subsection{Beyond Language Modeling}

We next explore additional tasks our approach was not specifically designed for, including reasoning and multimodal setups. 

\noindent \textbf{Reasoning Setup.}
We investigate whether optimizing data mixtures can boost model performance in reasoning tasks.
We use S1-Reasoning dataset \citep{muennighoff_s1_2025}, which comprises reasoning traces from challenging math problems, drawn from 54 distinct sources.
We use Qwen2-0.5B model~\citep{yang_qwen2_2024} as an illustrative example.

\begin{wraptable}{r}{0.4\textwidth}
\centering
\small
\renewcommand{\arraystretch}{1.3}
\setlength{\tabcolsep}{4pt}
\begin{tabular}{c|cc}
\toprule
\rowcolor{gray!15}
& \multicolumn{2}{c}{S1-59K} \\
\rowcolor{gray!15}
\textbf{Method} & \begin{tabular}[c]{@{}c@{}} Original \\ ($m=54$) \end{tabular} & \begin{tabular}[c]{@{}c@{}} Regroup \\ ($m=10$) \end{tabular}\\
\midrule
\rowcolor{white}
Stratified & 0.7517 & 0.7449 \\
\rowcolor{gray!15}
R\&B & - & 0.7449 \\
\bottomrule
\end{tabular}
\caption{Performance comparison of Stratified and R\&B methods on the S1-59K dataset.}
\label{tab:s1}
\end{wraptable}

\noindent \textbf{Reasoning Results.} Table \ref{tab:s1} shows that regrouping improves performance compared to using original domains, with the optimal number of groups being 10. Specifically, the evaluation loss decreased from 0.7517 to 0.7449 when using our regrouping approach instead of predetermined domains. However, after applying data mixing techniques to this clustered dataset, we observe that R\&B achieves comparable performance to stratified sampling with a slight improvement. This suggests that while clustering generally improves the model performance, applying data mixing techniques such as R\&B does not improve model performance further. We believe it is an open question as to whether data mixing can still be applied to such reasoning traces.
    


\begin{table}
\small
\centering
\renewcommand{\arraystretch}{1.3} 
\setlength{\tabcolsep}{2pt} 
\begin{tabular}{c|c|ccccc}
\toprule
\cellcolor{gray!15}\textbf{$m$} & \cellcolor{gray!15}\textbf{Method} & \cellcolor{gray!15}\textbf{ImageNet} & \cellcolor{gray!15}\textbf{\begin{tabular}[c]{@{}c@{}}ImageNet\\ dist. shift\end{tabular}} & \cellcolor{gray!15}\textbf{VTAB} & \cellcolor{gray!15}\textbf{Retrieval} & \cellcolor{gray!15}\textbf{\begin{tabular}[c]{@{}c@{}}Avg over \\38 datasets ($\uparrow$) \end{tabular}} \\
\midrule
\multirow{2}{*}{10} & Stratified & 0.034 & 0.044 & 0.157 & 0.104 & \cellcolor{brown!15}\textbf{0.146} \\
 & R\&B & 0.033 & 0.040 & 0.153 & 0.104 & 0.141 \\
\midrule
\cellcolor{gray!15} & \cellcolor{gray!15}Stratified & \cellcolor{gray!15}0.036 & \cellcolor{gray!15}0.044 & \cellcolor{gray!15}0.153 & \cellcolor{gray!15}0.106 & \cellcolor{gray!15}0.145 \\
\multirow{-2}{*}{\cellcolor{gray!15}20} & \cellcolor{gray!15}R\&B & \cellcolor{gray!15}0.031 & \cellcolor{gray!15}0.042 & \cellcolor{gray!15}0.163 & \cellcolor{gray!15}0.103 & \cellcolor{brown!15}\textbf{0.148} \\
\midrule
\multirow{2}{*}{50} & Stratified & 0.042 & 0.047 & 0.170 & 0.107 & 0.153 \\
 & R\&B & 0.042 & 0.047 & 0.177 & 0.108 & \cellcolor{brown!15}\textbf{0.158} \\
\midrule
\cellcolor{gray!15} & \cellcolor{gray!15}Stratified & \cellcolor{gray!15}0.034 & \cellcolor{gray!15}0.043 & \cellcolor{gray!15}0.152 & \cellcolor{gray!15}0.107 & \cellcolor{gray!15}0.139 \\
\multirow{-2}{*}{\cellcolor{gray!15}100} & \cellcolor{gray!15}R\&B & \cellcolor{gray!15}0.041 & \cellcolor{gray!15}0.047 & \cellcolor{gray!15}0.151 & \cellcolor{gray!15}0.104 & \cellcolor{brown!15}\textbf{0.145} \\
\midrule
\multirow{2}{*}{150} & Stratified & 0.034 & 0.043 & 0.165 & 0.109 & 0.143 \\
 & R\&B & 0.039 & 0.050 & 0.164 & 0.109 & \cellcolor{brown!15}\textbf{0.153} \\
\bottomrule
\end{tabular}
\caption{R\&B performs better than stratified sampling on image-text modalities.}
\label{tab:datacomp}
\end{table}



\noindent \textbf{Multimodal Setup.} 
We extend our setup to include multimodal tasks.
We train CLIP models~\citep{radford_learning_2021} from scratch using the small-scale DataComp dataset~\citep{ilharco_openclip_2021, gadre_datacomp_2023}.
Our dataset comprises approximately 10 million image-caption pairs sourced from the web.\footnote{Some  URLs provided by DataComp are now broken. 
See \href{https://github.com/mlfoundations/datacomp/issues/3}{here} for details.}
To ensure dataset quality, we select the top 30\% of samples based on CLIP Score~\citep{gadre_datacomp_2023}, retaining 3.8 million high-quality pairs.

We extract image embeddings for both the filtered training dataset and DataComp’s evaluation benchmark, which spans 38 diverse downstream tasks.
We apply k-means clustering to repartition data into varying numbers of groups.
We use DataComp's training configurations and adopt R\&B as our training method.
And, we use stratified sampling as our baseline method.

\noindent \textbf{Results.} We presented CLIP models' performance in Table~\ref{tab:datacomp}.
R\&B outperforms stratified sampling when the number of domains exceeds 10.
With 50 domains, \textbf{R\&B achieves a 3.27\% relative improvement} over the stratified sampling baseline.
These findings highlight the effectiveness of R\&B when the number of underlying domains is potentially high and validate its extensibility to modalities beyond natural language, confirming C2. 

%
%


\section{Conclusion}
In this paper, we introduced Regroup \& Balance (R\&B), a two-stage framework that breaks free from two fundamental constraints found in state-of-the-art data mixing strategies: the limitations of predetermined domains and the computational bottleneck of per-skill evaluations.
%
Empirically, R\&B matched or exceeded state-of-the-art data mixing methods while requiring two orders of magnitude less compute overhead.
By reimagining both what to mix and how to mix it, R\&B charts a more efficient path forward for foundation model training in an era of unlimited unstructured data and constrained computational resources.

\section{Acknowledgements}

We thank Changho Shin, Dyah Adila, Jiayu Wang, Jitian Zhao, Gabe Orlanski, June Cho, and Mayee Chen for discussions and feedback. 
Additionally we thank the Data Science Institute and the Center for High-Throughput Computing at UW-Madison for providing compute and research support.
We are grateful for the support of the Defense Advanced Research Projects Agency (DARPA) under the Young Faculty Award, the NSF under CCF2106707 (Program Synthesis for Weak Supervision), and the Wisconsin Alumni Research Foundation (WARF).

\bibliography{references}
\bibliographystyle{achemso}

\appendix
\appendix
\clearpage

The appendix is structured as follows. 
Appendix~\ref{appendix:notation} introduces our notation, followed by theoretical insights and proofs in Appendix~\ref{app:theory}.
Algorithmic details are provided in Appendix~\ref{alg_details}.
We then analyze the computational cost of existing data mixing methods in Appendix~\ref{appendix:compute_cost}. 
Implementation specifics and experimental setups are detailed in Appendix~\ref{appendix:implementation_details} and Appendix~\ref{appendix:experimental_details}, respectively.
Appendix~\ref{appendix:extended_training} presents the results of our ablation studies.
Lastly, we give interpretation for determined domains in Appendix~\ref{appendix:clustering}.

\section{Notation}
\label{appendix:notation}
\begin{center}
    \begin{tabular}{|c|c|}
        \hline
        Symbol & Meaning \\
        \hline
        $d$ & The dimensionality of each skill (toy theory) \\
        $m$ & The number of skills \\
        \hline
        $[\cdot]$ & The sequence of numbers $1, 2, \dots, n$ \\
        $\Delta^{m-1}$ & The $m-1$ dimensional simplex \\
        $\cD$ & A mixture data distribution \\
        $\cD_i$ & The distribution for skill $i$ \\
        $\cD_\p$ & The mixture of $\cD_i$ according to $\p$ \\
        $D_i$ & A sample from the distribution $\cD_i$ \\
        $G$ & The inner product between gradients of different skills $\nLD{}{i}$ \\
        $\LD{}{}$ & The expected loss of $\theta$ over $\cD_\p$ \\
        $\nLD{}{i}$ & The skill gradient for skill $i$ \\
        $\p$ & The evaluation data proportions \\
        $\p'$ & The chosen training data proportions \\
        $S$ & A skill-assigning function $S:\cX \rightarrow [m]$ \\
         \hline
    \end{tabular}
\end{center}
\section{Theoretical Results}
\label{app:theory}
The goal is to find the best data mixture throughout training. With the many degrees of freedom such an algorithm can take, a few assumptions are made. First, an update is restricted to being performed by SGD with the following update rule:
$$\theta_{t+1} = \theta_t - \eta \nLD{t}{\p'}.$$
The modification from standard SGD is the ability to change $\p'$ which allows for a different sampling mixture. 

\subsection{Method Derivation}
\label{app:derivation}
The design of the proportion finding algorithm described here comes from one core assumption: the gradient update works roughly linearly. Specifically, we assume, for a small enough ball around $\theta_t$,
$$\LD{}{\p} \approx \LD{t}{\p} + \nLD{t}{\p}^\top(\theta - \theta_t).$$
Inputting the SGD update rule,
$$\LD{t+1}{\p} \approx \LD{t}{\p} - \eta \nLD{t}{\p}^\top \nLD{t}{\p'}.$$
Since the gradient is linear, we can treat $\nLD{t}{\p}$ as a weighted sum of $\nLD{t}{i}$, i.e. the individual skill gradients, based on the proportions $\p_i$. Define the Gram matrix $G$ as 
$$G^{(t)}_{ij} = \nLD{t}{i}^\top \nLD{t}{j}.$$
Note that $G^{(t)}$ is dependent on the iteration, but this will often be written only as $G$ for notational clarity. This matrix also has a clear interpretation as a neural tangent kernel (NTK) aggregated over each skill rather than over each data point as is commonly used. With this matrix and treating $\p$ and $\p'$ as vectors, our assumption simplifies to
$$\LD{t+1}{\p} \approx \LD{t}{\p} - \eta \p^\top G \p'.$$
Since the aim is to decrease loss as much as possible, we want to maximize $\eta \p^\top G \p'$ over $\p' \in \Delta^{m-1}$. This objective is linear over the simplex, so optima will only be found at the corners of the simplex. However, this would imply only one class is sampled from. This causes a few issues. First, the per-skill gradient computation method requires samples from skill $i$ to find the gradient for skill $i$. No information about other skills will be gathered through training. Second, the gradient as a function of time will be discontinuous. As the maximum skill of the optimization changes, the gradient will instantly change to the skill gradient for the new maximal skill. We impose a common regularity to restrict $\p'$ to the simplex by solving for the following:
$$\p'= \argmax_{\p' \in \Delta^{m-1}} \eta \p^\top G \p' - \lambda_1 \sum_{i=1}^m \p'_i \log \p'_i + \lambda_2 \sum_{i=1}^m \p'_i.$$
Equivalently, we can multiply the $\lambda_1$ and $\lambda_2$ terms by a constant without affecting the optimum. This constant $\|\eta G p\|_2$ is chosen to make the parameter $\lambda_1$ have a roughly equal effect between the cross entropy term of the base objective, regardless of $\eta$, $G$, and $\p$. This also means that as the model trains and the gradients decrease in magnitude, the effects of the regularization term will decrease. Otherwise, proportions will tend towards uniform. Now, solve for:
$$\p'= \argmax_{\p' \in \Delta^{m-1}} \eta \p^\top G \p' - \lambda_1 \|\eta G\p\| \sum_{i=1}^m \p'_i \log \p'_i + \lambda_2 \|\eta G\p\| \sum_{i=1}^m \p'_i.$$
The parameter $\lambda_1$ acts as a normal hyper-parameter, but $\lambda_2$ is a Lagrange multiplier enforcing $\p'$ lay on the simplex. Taking a gradient and setting to 0, 
\begin{align*}
    0 &= \eta G \p - \lambda_1 \|\eta G\p\| \log \p' + (\lambda_1 + \lambda_2) \|\eta G\p\| \textbf{1}_m, \\
    \log \p' &= \frac{\eta}{\lambda_1 \|\eta G\p\|} G \p + \frac{\lambda_1 + \lambda_2}{\lambda_1} \textbf{1}_m, \\
    \p' &= \exp\left(\frac{\eta}{\lambda_1 \|\eta G\p\|}G\p \right) \exp \left( \frac{\lambda_1 + \lambda_2}{\lambda_1} \right) \\
    &= \exp\left(\frac{1}{\lambda_1 \|G\p\|}G\p \right) \exp \left( \frac{\lambda_1 + \lambda_2}{\lambda_1} \right). \\
\end{align*}
Here, $\textbf{1}_m$ is the vector of all $1$s of dimension $m$. Also, $\lambda_2$ will take on a value to make $\p'$ sum to 1. Thus, setting $Z = \sum_{i=1}^m \exp\left( \frac{1}{\lambda_1 \|G\p\|}G\p \right)_i$ and letting $\lambda = 1/\lambda_1$, we have
\begin{align*}
    \p' &= \frac{1}{Z}\exp\left( \frac{\lambda}{\|G\p\|} G \p \right) \\
    &= \softmax(\frac{\lambda}{\|G\p\|}  G \p).
\end{align*}
Note that in this parametrization, a large $\lambda$ indicates a small penalty from the entropy term. This solution aligns with the unconstrained solution described above from finding the maximal corner of the simplex, except now the solution is smooth.

\subsubsection{Max can be Better than Static proportions}

This method, for a small enough learning rate, will result in a reduction of a smooth loss greater than any other fixed proportions.

\begin{lemma}
    Let $\ell$ be an $L$-smooth loss function and $f_\theta$ is learned with SGD on datasets $\cD_i$ with associated sampling prior $\p_i$, and let $\p''$ be some other fixed distribution of datasets. If the learning rate $\eta$ satisfies $\eta \leq \frac{1}{L}\left( \frac{\max_i \nLD{0}{i}^\top \nLD{0}{\p} - (\nLD{0}{\p''})^\top \nLD{0}{\p}}{\max_i \|\nLD{0}{i}\|^2 + \|\nLD{0}{\p''}\|^2} \right)$, then a gradient descent step with $\p' = \delta_{\argmax_i (G\p)_i}$ results in a greater (or equal) decrease in the loss than $\p''$ .
\end{lemma}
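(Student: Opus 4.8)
The plan is to upgrade the first-order argument of the Method Derivation into an exact one-step comparison by carrying the second-order term controlled by $L$-smoothness. Throughout I track the decrease of the common evaluation objective $g(\theta) := \cL(\theta;\cD_\p)$. Since $\cL(\theta;\cD_\p)=\sum_j \p_j\,\cL(\theta;\cD_j)$ is a convex combination of the per-skill losses, it is again $L$-smooth and its gradient is $\nLD{0}{\p}=\sum_j \p_j \nLD{0}{j}$; consequently $(G\p)_i=\nLD{0}{i}^\top\nLD{0}{\p}$, so the corner rule selects $i^{*}=\argmax_i \nLD{0}{i}^\top\nLD{0}{\p}$ and produces the update direction $\nLD{0}{\p'}=\nLD{0}{i^{*}}$. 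Writing $v'=\nLD{0}{i^{*}}$ and $v''=\nLD{0}{\p''}$, the two one-step updates are $\theta_1=\theta_0-\eta v'$ and $\theta_1''=\theta_0-\eta v''$.

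The key tool is the two-sided quadratic bound implied by an $L$-Lipschitz gradient: for all $x,y$, $\bigl|g(y)-g(x)-\nabla g(x)^\top(y-x)\bigr|\le \tfrac{L}{2}\|y-x\|^2$. I would apply it at $x=\theta_0$, $y=\theta_0-\eta v$. Using the upper side for $v'$ gives a lower bound on the induced decrease, $g(\theta_0)-g(\theta_1)\ge \eta\,\nLD{0}{\p}^\top v' - \tfrac{L\eta^2}{2}\|v'\|^2$, while using the lower side for $v''$ gives an upper bound, $g(\theta_0)-g(\theta_1'')\le \eta\,\nLD{0}{\p}^\top v'' + \tfrac{L\eta^2}{2}\|v''\|^2$. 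This asymmetry is the crux of the argument: without convexity the descent lemma pins down only one direction, so I must invoke the full Lipschitz-gradient sandwich to control the competitor $\p''$ from above rather than from below.

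Combining the two bounds, a sufficient condition for $g(\theta_0)-g(\theta_1)\ge g(\theta_0)-g(\theta_1'')$ is $\nLD{0}{\p}^\top(v'-v'')\ge \tfrac{L\eta}{2}\bigl(\|v'\|^2+\|v''\|^2\bigr)$, equivalently the ceiling $\eta\le \tfrac{2\,\nLD{0}{\p}^\top(v'-v'')}{L(\|v'\|^2+\|v''\|^2)}$. It then remains to check that the stated hypothesis implies this. The numerator equals $\nLD{0}{\p}^\top(v'-v'')=\max_i \nLD{0}{i}^\top\nLD{0}{\p}-\nLD{0}{\p''}^\top\nLD{0}{\p}$, and it is nonnegative because $\nLD{0}{\p''}^\top\nLD{0}{\p}=\sum_j \p''_j\,\nLD{0}{j}^\top\nLD{0}{\p}$ is a convex combination of the very quantities being maximized. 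For the denominator, $\|v'\|^2=\|\nLD{0}{i^{*}}\|^2\le \max_i\|\nLD{0}{i}\|^2$, hence $\|v'\|^2+\|v''\|^2\le \max_i\|\nLD{0}{i}\|^2+\|\nLD{0}{\p''}\|^2$. Therefore the stated bound $\eta\le \tfrac{1}{L}\cdot\tfrac{\max_i \nLD{0}{i}^\top\nLD{0}{\p}-\nLD{0}{\p''}^\top\nLD{0}{\p}}{\max_i\|\nLD{0}{i}\|^2+\|\nLD{0}{\p''}\|^2}$ is no larger than the tight ceiling above (the extra factor of $2$ and the $\max$ relaxation only make it more conservative), so it is sufficient and the conclusion follows.

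I expect the only genuine obstacle to be the non-convex comparison just described, namely securing an \emph{upper} bound on the competitor's decrease rather than a lower bound, which forces the two-sided smoothness estimate; this should be paired with the bookkeeping point that the loss being decreased is the shared evaluation objective $\cL(\cdot;\cD_\p)$ and not each rule's own training mixture. The remaining pieces (nonnegativity of the numerator and the norm relaxation in the denominator) are routine.
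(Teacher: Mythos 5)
Your proposal is correct and follows essentially the same route as the paper's own proof: a one-step comparison of the two updates on the shared evaluation objective $\cL(\cdot\,;\cD_\p)$ using the two-sided $L$-smoothness quadratic bounds, nonnegativity of the gap $\max_i(G\p)_i - \p^\top G\p''$ via the convex-combination argument, and the relaxation $G_{i^*i^*}\le\max_i\|\nLD{0}{i}\|^2$ to reach the stated learning-rate condition. The only cosmetic difference is that you track the factor of $\tfrac{1}{2}$ from the smoothness bound explicitly (showing the stated threshold is conservative), whereas the paper absorbs it by writing the weaker remainder term $\eta^2 L\,{\p'}^\top G\p'$ directly.
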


\begin{proof}
Let $\theta_0$ be some base parameter, $\p' = \delta_i$ for $i = \argmax_i (G\p)_i$, and let 
\begin{align*}
    \theta_{\p'} &= \theta_0 - \eta \nLD{0}{\p'}, \\
    \theta_{\p''} &= \theta_0 - \eta \nLD{0}{\p''}.
\end{align*}
Assume that $\cL$ is $L$-smooth. Now consider
\begin{align*}
    \LD{\p''}{\p} - \LD{\p'}{\p} &= (\LD{\p''}{\p}-\LD{0}{\p}) - (\LD{\p'}{\p}-\LD{0}{\p}) \\
    &\geq -\eta \p^\top G \p'' + \eta \p^\top G \p' - \eta^2 L {\p'}^\top G \p' - \eta^2 L {p''}^\top G \p'' \\
    &= \eta (\max_i (G\p)_i - {\p}^\top G\p'' -\eta L ({\p'}^\top G \p' + {\p''}^\top G \p'')).
\end{align*}
Now let $i = \argmax_i (G\p)_i$ and let $\eta \leq \frac{1}{L}\left( \frac{(G\p)_i - \p^\top G \p''}{G_{ii} + {\p''}^\top G \p''} \right)$. This quantity is positive (and therefore well defined for $\eta \geq 0$ since $\p^\top G \p''$ is the convex combination of values all less than (or equal to) $(G\p)_i$. Thus,
\begin{align*}
    \LD{\p''}{\p} - \LD{\p'}{\p} &\geq \eta((G\p)_i - \p^\top G \p'' - \eta L (G_{ii} + {\p''}^\top G \p'')) \\
    &\geq \eta((G\p)_i - \p^\top G \p'' - \frac{1}{L}\left( \frac{(G\p)_i - \p^\top G \p''}{G_{ii} + {\p''}^\top G \p''} \right) L(G_{ii} + {\p''}^\top G \p'')) \\
    &= \eta((G\p)_i - \p^\top G \p'' - ((G\p)_i - \p^\top G \p'')) \\
    &= 0. 
\end{align*}
Therefore, for a small enough learning rate, the choosing the gradient with the largest skill results in a larger decrease in the loss than using priors proportional to the evaluation data. The learning rate can also be bounded using gradient notation and taking a maximum over $G_{ii}$:
$$\eta \leq \frac{1}{L}\left( \frac{\max_i \nLD{\theta_0}{i}^\top \nLD{\theta_0}{\p} - (\nLD{\theta_0}{\p''})^\top \nLD{\theta_0}{\p}}{\max_i \|\nLD{\theta_0}{i}\|^2 + \|\nLD{0}{\p''}\|^2} \right).$$
\end{proof}

\subsubsection{Clustering} 
\label{theory:clustering}

One major phenomenon observed here is that clustering the data points works well for domain mixing, and sometimes even outperforms the provided labels for the different skills. These clusters are taken via the embeddings of some other model, which are assumed to mimic the gradients of the model being learned.

When $\eta \approx 0$, the change in the loss can be well-approximated by its first order Taylor expansion:
$$\LD{t}{\p} - \LD{t+1}{\p} \approx \eta \nLD{t}{\p}^\top \nLD{t}{\p'}.$$

For the following sections, let $D_i = \{x \in D : S(x) = i\}$ for some skill-assigning function $S$.
\begin{definition}
    A skill-assigning function $S:D\rightarrow [m]$ is \underline{stable in the direction $\nLD{t}{\p}$} if for the skill $i = \argmax_{i \in [m]} \nLd{t}{i}^\top \nLD{t}{\p}$ and any other $j \in [m]$, exchanging a pair $x_i \in D_i$, $x_j \in D_j$ does not improve $\nLd{t}{i}^\top \nLD{t}{\p}$.
\end{definition}

Intuitively, a stable clustering is one which doesn't improve $\nLd{t}{i}^\top \nLD{t}{\p}$ by exchanging points between classes.

\begin{lemma}
    Let $S$ be some $\{0, 1\}$ skill-assigning function on $D$ with $\nLd{t}{0}^\top \nLD{t}{\p} \geq \nLd{t}{1}^\top \nLD{t}{\p}$ and let $x_0, x_1 \in D$ with $S(x_0) = 0$ and $S(x_1) = 1$, and let $\tilde C$ be the clustering identical to $S$ except on $x_0, x_1$ where each is assigned to the opposite class.  
    Then, $\nLd{t}{\tilde S, \p'}^\top \nLD{t}{\p}^\top > \nLd{t}{S, \p'}^\top \nLD{t}{\p}$ if $\nabla \cL(\theta_t, x_1)^\top \nLD{t}{\p} > \nabla \cL(\theta_t, x_0)^\top \nLD{t}{\p}$.
\end{lemma}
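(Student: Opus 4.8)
The plan is to reduce the claim to a single averaging identity, using that a cluster gradient is the mean of its per-sample gradients, so that exchanging one point perturbs it linearly. First I would fix what $\nLd{t}{S,\p'}$ means: as shown in Section~\ref{section:optprops}, for $\eta\approx 0$ the optimal training proportion solves the linear program $\argmax_{\p'\in\Delta^{m-1}}\sum_i(\p')_i\,\nLd{t}{i}^\top\nLD{t}{\p}$, whose maximizer is the simplex corner at the skill with largest alignment $\nLd{t}{i}^\top\nLD{t}{\p}$. By the hypothesis $\nLd{t}{0}^\top\nLD{t}{\p}\geq\nLd{t}{1}^\top\nLD{t}{\p}$, that skill is skill $0$, so $\nLd{t}{S,\p'}^\top\nLD{t}{\p}=\nLd{t}{0}^\top\nLD{t}{\p}=\nabla\cL(\theta_t;D_0)^\top\nLD{t}{\p}$ (I read the trailing $^\top$ on $\nLD{t}{\p}$ in the statement as a typo).

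Next I would carry out the core computation. Writing $\nabla\cL(\theta_t;D_0)=\tfrac{1}{|D_0|}\sum_{x\in D_0}\nabla\cL(\theta_t,x)$ and noting that $\tilde D_0=(D_0\setminus\{x_0\})\cup\{x_1\}$ has the same cardinality as $D_0$ (one point removed, one inserted), the averaging identity gives
\[
\nabla\cL(\theta_t;\tilde D_0)-\nabla\cL(\theta_t;D_0)=\frac{1}{|D_0|}\bigl(\nabla\cL(\theta_t,x_1)-\nabla\cL(\theta_t,x_0)\bigr).
\]
Taking the inner product with $\nLD{t}{\p}$ yields
\[
\nabla\cL(\theta_t;\tilde D_0)^\top\nLD{t}{\p}-\nabla\cL(\theta_t;D_0)^\top\nLD{t}{\p}=\frac{1}{|D_0|}\bigl(\nabla\cL(\theta_t,x_1)^\top\nLD{t}{\p}-\nabla\cL(\theta_t,x_0)^\top\nLD{t}{\p}\bigr),
\]
which is strictly positive exactly under the stated hypothesis $\nabla\cL(\theta_t,x_1)^\top\nLD{t}{\p}>\nabla\cL(\theta_t,x_0)^\top\nLD{t}{\p}$.

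The one point I would be careful about is that the swap must not change which skill $\p'$ selects. Here I would note that the very same averaging identity, applied to cluster $1$, shows its alignment strictly \emph{decreases} (it loses the higher-alignment point $x_1$ and gains the lower-alignment point $x_0$), while cluster $0$'s alignment strictly increases. Since the lemma is stated for a $\{0,1\}$-valued $S$, there are only these two skills, so skill $0$ remains the argmax under $\tilde S$ and $\nLd{t}{\tilde S,\p'}^\top\nLD{t}{\p}=\nabla\cL(\theta_t;\tilde D_0)^\top\nLD{t}{\p}$; combined with the displayed identity this gives $\nLd{t}{\tilde S,\p'}^\top\nLD{t}{\p}>\nLd{t}{S,\p'}^\top\nLD{t}{\p}$, as claimed. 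The main obstacle is therefore not the algebra, which is elementary, but correctly identifying $\nLd{t}{S,\p'}$ with the max-alignment cluster gradient via the simplex-corner solution of Section~\ref{section:optprops} and verifying that this corner is preserved by the improving swap; once that identification is made, the result is immediate.
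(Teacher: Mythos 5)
Your proof is correct and takes essentially the same route as the paper: both rest on the same one-swap perturbation identity, namely that exchanging $x_0$ and $x_1$ shifts cluster $0$'s mean gradient by $\tfrac{1}{|D_0|}\bigl(\nabla\cL(\theta_t,x_1)-\nabla\cL(\theta_t,x_0)\bigr)$, whose inner product with $\nLD{t}{\p}$ is positive under the hypothesis. The only difference is bookkeeping: the paper fixes $\p'$ once and for all as the simplex corner maximizing alignment under $S$ and compares both clusterings under that same $\p'$, whereas you re-solve the argmax for $\tilde S$ and verify it is preserved by the swap (cluster $0$'s alignment rises, cluster $1$'s falls) --- a small check the paper leaves implicit, and a harmless strengthening rather than a departure.
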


\begin{proof}
    Let 
 $D_i = \{x \in D: S(x) = i\}$, 
         $\tilde D_i = \{x \in D: \tilde S(x) = i\}$, 
         $n_i = |D_i|$, and
         $\p'$ be the proportions that put mass only on the maximal value of $A\nLD{t}{\p}$. 
    If we define 
    $$A = \begin{bmatrix}
        \nLd{t}{0} \\
        \nLd{t}{1} 
    \end{bmatrix},$$
    then
    \begin{align*}
        &\nLd{t}{S}^\top \nLD{t}{\p} = {\p'}^\top A \nLD{t}{\p}, \\
        &\nLd{t}{\tilde S}^\top \nLD{t}{\p} = {\p'}^\top \tilde A \nLD{t}{\p} \\
        &\qquad = {\p'}^\top A \nLD{t}{\p} + {\p'}^\top \begin{bmatrix}
            \frac{1}{n_1}\nabla \cL(\theta_t, x_1)^\top \nLD{t}{\p} - \frac{1}{n_1}\nabla \cL(\theta_t, x_0)^\top \nLD{t}{\p} \\
            \frac{1}{n_0}\nabla \cL(\theta_t, x_0)^\top \nLD{t}{\p} - \frac{1}{n_0}\nabla \cL(\theta_t, x_1)^\top \nLD{t}{\p}
        \end{bmatrix}.
    \end{align*}
   Therefore swapping the classes of $x_0$ and $x_1$ results in an improvement for $\tilde S$ over $S$ if $\nabla \cL(\theta_t, x_1)^\top \nLD{t}{\p} > \nabla \cL(\theta_t, x_0)^\top \nLD{t}{\p}$.
\end{proof}

We immediately have the following corollary:
\begin{lemma}
    A skill-assigning function $S:D\rightarrow [m]$ is stable in the direction $\nLD{t}{\p}$ if for the skill $i = \argmax_{i \in [m]} \nLd{t}{i}^\top \nLD{t}{\p}$ and any other $j \in [m]$, for all $x_i \in D_i$, $x_j \in D_j$,
    $$\nLD{t}{\p}^\top \nabla \cL(\theta_t, x_i) \geq \nLD{t}{\p}^\top \nabla \cL(\theta_t, x_j).$$
\end{lemma}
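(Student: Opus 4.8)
The plan is to derive this directly from the two-class swap lemma stated just above, applied one pair of skills at a time. Stability in the direction $\nLD{t}{\p}$ is a statement about the single maximal skill $i = \argmax_{i\in[m]} \nLd{t}{i}^\top\nLD{t}{\p}$ and its comparison against each other skill $j$, so it suffices to fix an arbitrary $j \neq i$ together with an arbitrary pair $x_i\in D_i$, $x_j\in D_j$, and show that exchanging them does not increase $\nLd{t}{i}^\top\nLD{t}{\p}$.

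First I would reduce to the binary case. Restricting attention to the two clusters $D_i$ and $D_j$ and relabeling $i \mapsto 0$, $j \mapsto 1$, the ordering premise of the previous lemma, namely $\nLd{t}{i}^\top\nLD{t}{\p} \geq \nLd{t}{j}^\top\nLD{t}{\p}$, holds automatically since $i$ is the maximizer. The previous lemma then tells us that swapping $x_i$ and $x_j$ strictly improves the alignment of the (formerly maximal) cluster exactly when $\nabla\cL(\theta_t,x_j)^\top\nLD{t}{\p} > \nabla\cL(\theta_t,x_i)^\top\nLD{t}{\p}$. This can equivalently be seen by a one-line computation: since $\nLd{t}{i}$ is the empirical average of the per-sample gradients in $D_i$, the post-swap alignment changes by exactly $\tfrac{1}{|D_i|}\big(\nabla\cL(\theta_t,x_j) - \nabla\cL(\theta_t,x_i)\big)^\top\nLD{t}{\p}$, whose sign is governed by the same comparison.

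Next I would invoke the hypothesis of the present lemma: for all $x_i\in D_i$ and $x_j\in D_j$ we have $\nLD{t}{\p}^\top\nabla\cL(\theta_t,x_i) \geq \nLD{t}{\p}^\top\nabla\cL(\theta_t,x_j)$. This is precisely the negation of the strict-improvement condition, so the exchange can never increase $\nLd{t}{i}^\top\nLD{t}{\p}$. Since $j$ and the exchanged pair were arbitrary, no swap improves the alignment of the maximal skill, which is the definition of stability in the direction $\nLD{t}{\p}$.

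The argument is essentially a contrapositive bookkeeping step, so I do not expect a real obstacle beyond matching the roles of the points carefully. The only point deserving attention is confirming that the swap affects solely the maximal cluster's alignment in the relevant direction, and that the ordering premise required by the prior lemma is supplied for free by the definition of $i$ as the argmax; once those are in place the conclusion is immediate.
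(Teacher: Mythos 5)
Your proposal is correct and matches the paper's intent exactly: the paper presents this statement as an immediate corollary of the preceding two-class swap lemma, and your reduction (fix the argmax skill $i$ and an arbitrary $j$, note the ordering premise holds for free, compute that the swap changes the alignment by $\tfrac{1}{|D_i|}\bigl(\nabla\cL(\theta_t,x_j)-\nabla\cL(\theta_t,x_i)\bigr)^\top\nLD{t}{\p}$, and apply the hypothesis to rule out improvement) is precisely that argument, written out in full.
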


An important fact to note is that the evaluation gradient $\nLD{t}{\p}$ can be arbitrary, especially if the evaluation and training data come from different distributions. To reduce the benefit of swapping points between classes, a good clustering will be stable in as many directions as possible.

A simple but noisy choice is to take all points in the convex hull to be in unique clusters, and all interior points to make up another cluster. While this clustering is stable in every direction, the classes are very small and therefore likely to be noisy, especially as training progresses and the gradient landscape shifts. A better alternative is clustering points if they can be linearly separated from the others.

Assume $D$ can be partitioned into $D_0$ and $D_1$ such that $f(x) = \sign(\vv^\top x + b)$ is a perfect classifier, so in the direction $\vv$, $S$ which labels the data based on the partition is stable. If $f$ has a large margin, then many other $\vv$ also a linear separators, and therefore also have $S$ stable. 

This still may be too restrictive though in general settings where data points are more spread apart. Instead, it may be good to compare the regret of skill-labeling with a sub-optimal labeling.

\begin{definition}
    The regret $R_S(i, j)$ under the skill-assigning function $S$ for class $j$ is 
    the difference between $\max_{\tilde D_i \subset D_i \cup D_j, |\tilde D_i| = |D_i|}\nLD{t}{\p}^\top \nabla \cL(\theta_t, \tilde D_i)$ and $\nLD{t}{\p}^\top \nLd{t}{i}$.
\end{definition}

This regret is exactly difference between the first-order loss decrease using $\tilde D_i$ as compared to $D_i$, where new elements in $\tilde D_i$ come from $D_j$.

\begin{lemma}
    Let $i, j \in |m|$ and assume $|D_i| = |D_j|$. Assume $\nLd{t}{i}^\top \nLD{t}{\p} \geq \nLd{t}{j}^\top \nLD{t}{\p}$, and let $r_i = \max_{x \in D_i} |\nabla \cL(\theta_t, x)^\top \nLD{t}{\p} - \nLd{t}{i}^\top \nLD{t}{\p}|$ and similarly define $r_j$. Then
    $$R_S(i, j) \leq \max\{0, \frac{1}{2}(r_i + r_j - (\nLd{t}{i}^\top \nLD{t}{\p} - \nLd{t}{j}^\top \nLD{t}{\p}))\}.$$
\end{lemma}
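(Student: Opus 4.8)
The plan is to reduce everything to a one-dimensional problem by projecting gradients onto the fixed evaluation direction $g = \nLD{t}{\p}$. For each $x$ I write $a(x) = \nabla\cL(\theta_t, x)^\top \nLD{t}{\p}$, so that $\nLd{t}{i}^\top \nLD{t}{\p}$ is the cluster average $\bar a_i = \frac{1}{n}\sum_{x\in D_i} a(x)$ (with $n = |D_i| = |D_j|$), the radii give $|a(x) - \bar a_i| \le r_i$ for all $x \in D_i$ (and similarly for $j$), and the hypothesis reads $\bar a_i \ge \bar a_j$. Set $\Delta = \bar a_i - \bar a_j \ge 0$. Since the $\tilde D_i$ maximizing $\nLD{t}{\p}^\top\nabla\cL(\theta_t,\tilde D_i) = \frac{1}{n}\sum_{x\in\tilde D_i} a(x)$ over size-$n$ subsets of $D_i\cup D_j$ is obtained by keeping the $n$ largest $a$-values, the optimum is reached by exchanging the bottom $k$ points of $D_i$ for the top $k$ points of $D_j$ for some $0\le k\le n$. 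Writing $U_k$ for the average of the top $k$ values of $D_j$ and $L_k$ for the average of the bottom $k$ values of $D_i$, the regret becomes $R_S(i,j) = \max_{0\le k\le n} \frac{k}{n}(U_k - L_k)$; the choice $k=0$ already shows $R_S(i,j)\ge 0$, which will supply the ``$\max\{0,\cdot\}$''.

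Next I would bound $U_k$ and $L_k$ from two directions each. The radius bounds give $U_k \le \bar a_j + r_j$ and $L_k \ge \bar a_i - r_i$. The key, and the part I expect to be the crux, is the complementary bound coming from the fact that each whole-cluster average is pinned to $\bar a_j$ (resp. $\bar a_i$): since the remaining $n-k$ points of $D_j$ each have value at least $\bar a_j - r_j$, I get $k U_k \le n\bar a_j - (n-k)(\bar a_j - r_j)$, that is $U_k \le \bar a_j + \frac{n-k}{k} r_j$, and symmetrically $L_k \ge \bar a_i - \frac{n-k}{k} r_i$. Combining the two regimes,
\[
U_k - L_k \le -\Delta + (r_i + r_j)\,\min\!\Big\{1,\tfrac{n-k}{k}\Big\}.
\]

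Finally I would optimize the resulting bound over $t = k/n \in (0,1]$. For $t\le \tfrac12$ the minimum equals $1$ and the objective is $t(r_i + r_j - \Delta)$, increasing in $t$ when $r_i+r_j>\Delta$; for $t>\tfrac12$ it equals $(r_i+r_j)(1-t) - t\Delta$, which is decreasing in $t$ since $\Delta\ge 0$. Hence both branches are maximized at $t=\tfrac12$, yielding $\tfrac12(r_i + r_j - \Delta)$, while if $r_i+r_j\le\Delta$ the supremum is at $t\to 0$ giving $0$. Together with $R_S(i,j)\ge 0$ this gives $R_S(i,j)\le\max\{0,\tfrac12(r_i+r_j-\Delta)\}$, as claimed. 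The main obstacle is precisely recognizing that the naive per-swap estimate (using only the radii) loses a factor of two: swapping a full half of the cluster cannot keep every swapped-in point at the extreme $\bar a_j + r_j$, and it is exactly the averaging constraint that enforces the $\tfrac12$.
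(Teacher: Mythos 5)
Your proof is correct, and it follows the same initial reduction as the paper's: project every gradient onto the fixed direction $\nLD{t}{\p}$, observe that the optimal $\tilde D_i$ keeps the $n$ largest projected values, and hence that the regret is a maximum over swap counts $k$ of $\frac{k}{n}(U_k - L_k)$. Where you diverge is at the crux. The paper handles the factor of $\tfrac12$ by an extremal-configuration claim: it \emph{asserts} that the regret is maximized when half of each cluster sits at $\bar a \pm r$, computes the gain for that configuration, and concludes. That assertion is plausible but is not justified in the paper's proof. You instead prove the bound directly: the averaging constraint on each whole cluster gives the complementary estimates $U_k \le \bar a_j + \frac{n-k}{k} r_j$ and $L_k \ge \bar a_i - \frac{n-k}{k} r_i$, which combined with the plain radius bounds yield $U_k - L_k \le -\Delta + (r_i + r_j)\min\bigl\{1, \tfrac{n-k}{k}\bigr\}$, and then a one-variable optimization over $t = k/n$ shows both branches peak at $t = \tfrac12$ with value $\tfrac12(r_i + r_j - \Delta)$. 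So your argument is genuinely different at the decisive step, and it buys rigor: it proves exactly the worst-case statement the paper takes for granted (in effect showing \emph{why} the half-and-half configuration is extremal, via the constraint that the swapped-in points cannot all sit at the extreme without dragging the cluster mean). The only cosmetic caution is that $k$ is an integer, so the discrete maximum is bounded by your continuous supremum; since you only need an upper bound, this is fine, but it is worth saying explicitly.
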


\begin{proof}
    Let $R_i = \{\nabla \cL(\theta_t, x)^\top \nLD{t}{\p} | x \in D_i\}$ and $R_j = \{\nabla \cL(\theta_t, x)^\top \nLD{t}{\p} | x \in D_j\}$. Also, in this notation, $r_i = \max_{x \in R_i} |x - \E_{x \sim R_i}[x]|$, and define $\delta = \E_{x \in R_i}[x] - \E_{x \in R_j}[x]$. The problem then reduces to
    $$R_S(i, j) \leq \max\{0, \frac{1}{2}(r_i + r_j - \delta)\}.$$
    Also, $R_S(i, j)$ in this one dimensional case becomes the largest over $m \in [|R_i|]$ of the difference between the $m$ largest values of $R_j$ and the $m$ smallest values of $R_i$. This is maximized over all possible $R_i$ and $R_j$ when half of $R_i$ is $\E_{x\in R_i}[x] - r_i$ and the other half is $\E_{x\in R_i}[x] + r_i$, and similarly for $R_j$. The difference between the max $R_j$ and the min $R_i$ is $r_i + r_j - \delta$, and only half of these data points attain these max and min values, so $R_S(i, j) \leq \max\{0, \frac{1}{2}(r_i + r_j - \delta)\}$ as desired.
\end{proof}

This extends the case where skills $i$ and $j$ are linearly separable in the direction $\nLD{t}{\p}$. It further provides insight in how to pick skills. To reduce any pairwise regret, the radii $r_i$ and $r_j$ of the clusters from their mean should be as small as possible in every direction.

\subsubsection{OOD evaluation clusters}

When performing k-means clustering, there is a choice in clustering the training points and then assigning the evaluation points, or clustering the evaluation points and then assigning the training points. The latter choice has a major problem: evaluation clusters may have no training points near them. This causes a major dilemma for the training procedure attempting to sample from a distribution that lacks any data points.

We adopt the former method of clustering based on the training points to circumvent this issue. However, a new issues arises: evaluation data may be OOD and have no representatives in the training data. The result is the label provided to those OOD points is the same as the closest training points. These can be quite distant and therefore not a strong representation of their gradient. However, this still is the optimal choice, as all other training points are a greater distance away and therefore have a weaker similarity. This label assignment also adds more weight to the class that is most aligned with the OOD evaluation data, increasing its sample rate to learn both the ID and OOD data for that class.
\section{Algorithm details}
\label{alg_details}
We fully outline our algorithms for solving the optimization problems specified in Equation \ref{upperlevel} and Equation \ref{lowerlevel}, respectively.
And we provide our Algorithm details in Algorithm~\ref{alg:skill-partitioning} and Algorithm~\ref{alg:scramble}.

\begin{algorithm}
\caption{R\&B Skill Partitioning}
\begin{algorithmic}[1]
\State \textbf{Input:} Training data $\Dtr$, Evaluation data $\Dev$, 
Embedding model $\psi : \cX \rightarrow \sR^d$,
\State Clustering algorithm $cluster: \cP(\sR^d) \times \cK \rightarrow \sN \times (\sR^d \rightarrow \sN)$,
\State Clustering metric $metric: (\sR^d \rightarrow \sN) \times \cP(\sR^d) \rightarrow \sR$, 
\State Range of clustering hyperparameters $K \in \cK$
\State \textbf{Output:} Optimal number of clusters $m^*$, Optimal mapping function $f^*$, Partitioned datasets $\{\Dtri^*\}_{i=1}^{m^*}$, $\{\Devi^*\}_{i=1}^{m^*}$

\State $\Xtr \leftarrow \{\psi(x) : x \in \Dtr\}$ \Comment{Collect embeddings for training data}

\State $\Xev \leftarrow \{\psi(x) : x \in \Dev\}$ \Comment{Collect embeddings for eval data}

\For{$k \in K$}
    \State $m, f$ = $cluster(\Xtr, k)$
    \State $score = metric(f, \Xtr)$
    \State $m^*, f^*, score^* = \argmax_{m,f,score}(score, score^*) $
\EndFor 

\For{$i = 1$ to $m^*$}
    \State $\Dtri^* \leftarrow \{x \in \Dtr : f^*(x) = i\}$ \Comment{Partition training data}
    \State $\Devi^* \leftarrow \{y \in \Dev : f^*(y) = i\}$ \Comment{Partition evaluation data}
\EndFor

\State \textbf{Return} $m^*, f^*, \{\Dtri^*\}_{i=1}^{m^*}$, $\{\Devi^*\}_{i=1}^{m^*}$

\end{algorithmic}
\label{alg:skill-partitioning}
\end{algorithm}

\begin{algorithm}
\caption{R\&B Online Data Selection}\label{alg:scramble}
\begin{algorithmic}[1]
\State \textbf{Input:} Partitioned datasets $\{\Dtri^*\}_{i=1}^{m^*}$, $\{\Devi^*\}_{i=1}^{m^*}$, model parameters $\theta \in \sR^n$, training rounds $T$, steps per round $K$, evaluation proportions $\p' \in \sR^{m^*}$
\State Initialize sampling distribution $\p^0 = \text{Uniform}(m^*)$ 
\For {$t = 0, \dots, T-1$}
    \For{$i \in [m^*]$}
        \State $\nLd{}{i} \leftarrow \textbf{0}_{n}$ \Comment{Initialize gradient accumulator for domain $i$}
        \State $\cS_i \leftarrow \emptyset$ \Comment{Initialize set of samples from domain $i$}
    \EndFor
    \For {$k = 0, \dots, K-1$} 
        \State Sample batch $\batch = \{x_j\}_{j=1}^B$ where $x_j \sim \Dtri^*$ with $i \sim \p^t$ for each $j$      
        \State $\theta_{tK + k+1} \leftarrow \theta_{tK + k} - \eta \nabla\cL(\theta_{tK + k}; \batch)$ 
        \For {$i \in \{f^*(d): d \in \batch \}$}
            \State $\nLd{}{i} \leftarrow \nLd{}{i} + \nabla \cL(\theta_{tK + k}; \batch \cap \Dtri^*)$ 
            \State $\cS_i \leftarrow \cS_i \cup (\cD \cap \Dtri^*)$ 
        \EndFor
    \EndFor
    \State Construct $G \in \sR^{m^* \times m^*}$ where $G_{ij} = \frac{1}{|\cS_i||\cS_j|} \nLd{}{i}^T \nLd{}{j}$
    \State $\p^{t+1} \leftarrow \text{softmax}(\eta\lambda G \p')$ 
\EndFor
\State \textbf{Return} 
\end{algorithmic}
\label{alg:scramble}
\end{algorithm}
\section{Compute cost models for online data mixing}
\label{appendix:compute_cost}

\begin{table}[h]
\centering
\small
\renewcommand{\arraystretch}{1.3}
\setlength{\tabcolsep}{4pt}
\resizebox{\textwidth}{!}{%
\begin{tabular}{l|l|l}
\toprule
\rowcolor{gray!15}
\textbf{Method} & \textbf{Total Compute Cost (FLOPs)} & \begin{tabular}[c]{@{}c@{}} \textbf{Relative Compute Overhead} \\ \textbf{(vs. Standard Training)} \end{tabular} \\
\midrule
\rowcolor{white}
Standard Training & $6D_tN$ & $0$ \\
\rowcolor{gray!15}
Skill-It \citep{chen_skill-it_2023} & $6(1 + m\delta)D_tN + 2(T+m)D_eN$ & $m\delta + \frac{(T+m)D_e}{3D_t}$ \\
\rowcolor{white}
Aioli \citep{chen_aioli_2024} & $6D_tN + 2(Tm)D_eN$ & $\frac{Tm D_e}{3D_t}$ \\
\rowcolor{gray!15}
DGA \citep{fan_dynamic_2024} & $6(1 + m\delta) D_tN + 6T(\delta D_e)N$ & $m\delta + T\delta\frac{D_e}{D_t}$ \\
\rowcolor{white}
R\&B (Ours) & $6D_tN + Tm^2N$ & $\frac{Tm^2}{6D_t}$ \\
\bottomrule
\end{tabular}}
\caption{Computational cost comparison of data mixing methods. We report (1) total cost of training, given under the table Total Compute Cost, and relative compute overhead over standard training.
Standard training requires no additional compute overhead since its proportions are fixed. In the common setting where the number of skills $m$ is much smaller than that of evaluation tokens $D_e$ and training tokens $D_t$, R\&B enjoys superior computational efficiency.}
\label{tab:compute_cost}
\end{table}


We formalize a cost model for estimating the amount of compute required for several data mixing methods. Table \ref{tab:compute_cost} reports cost in terms of FLOPs, or number of floating point operations required to perform each method.

Following \cite{kaplan_scaling_2020}, we will use the estimate for the compute cost $C = C_{\text{forward}} + C_{\text{backward}} \approx 2ND + 4ND$, where $N$ is the number of model parameters, and $D$ is the number of training tokens. Here, we also make use of the empirical observation that the amount of compute for a backward pass is roughly twice that of the amount for a forward pass \citep{hobbhahn_whats_2021}.

For all methods analyzed, we make the following assumptions:
\begin{itemize}[noitemsep,topsep=0pt]
\item Each method trains on $D_t$ tokens across $m$ domains,
\item Each method has access to an evaluation dataset with $D_e$ tokens,
\item Training is divided into $T$ rounds with domain reweighting between rounds,
\item Each method uses some fraction of the training dataset, $\delta \cdot D_t$ (where $\delta < 1$), to perform their reweighting procedure.
\end{itemize}

For our analysis, it is necessary to split the forward and backward compute costs because the data mixing algorithms we study involve a domain-reweighting mechanism that requires model evaluation on a hold-out dataset. Model evaluation only requires a forward pass, whereas model training requires both a forward and backward pass. To illustrate this point, let $D_{train}$ be the number of tokens in the training dataset, while $D_{eval}$ is the number of tokens in the evaluation dataset. Training a model on all available training data has a total cost of $6ND_{train}$, while computing model evaluation once has a cost of $2ND_{eval}$. 


\subsection{Skill-It}

Skill-It \citep{chen_skill-it_2023} has two stages in its data-mixing procedure: estimating a graph $A$ which is used as part of its domain reweighting procedure, and training itself.

For learning $A$, a model is trained on each of $m$ domains for some fraction of $D_{train}$, then evaluated on $D_{eval}$. For comparative purposes, we will assume that $\delta D_{train}$ training tokens are used in this process of constructing $A$, for $\delta < 1$. Furthermore, we assume these tokens are divided evenly among each of $m$ domains. Then the compute cost for learning $A$ is 
$6(\delta D_{train})N + 2(m D_{eval})N$. Training is split into $T$ rounds, and after each round, the model is re-evaluated on $D_{eval}$ to update the domain proportions. The compute cost for training, then, is $6(D_{train})N + 2TD_{eval}N$. This brings the total compute cost to 
\[
6(1 + \delta) D_{train} N + 2 (T+m)D_{eval}N. 
\]

\subsection{Aioli}

Similar to Skill-It, Aioli \citep{chen_aioli_2024} also includes two stages for learning $A$ and training, but incorporates both directly into the training process. At a high level, training is also split into $T$ rounds, where each round dedicates some fraction $\delta < 1$ to learning $A$. When learning $A$, the model is trained on each of $m$ domains sequentially, and re-evaluating the resulting model on the evaluation dataset. Consequently, the training compute cost for learning $A$ is simply absorbed into the overall cost for training, but the model still must be evaluated on $D_{eval}$ for each domain. Within a round, this process repeats for the number of sweeps $k$, but here we will set $k=1$ to simplify the analysis. Therefore, the total compute cost for training improves to $6(D_{train})N$, but the compute cost for evaluation increases to $2(Tm)D_{eval}N$. This brings the total compute cost to 
\[ 
6D_{train}N + 2(Tm)D_{eval}N.
\]

\subsection{DGA}

Dynamic Gradient Alignment \citep{fan_dynamic_2024} instead uses gradient information to reweight the domain proportions. Their method splits training into $T$ rounds, and reweights proportions after each round. Their procedure involves sampling a batch from each domain, and then performing a forward and backward pass to obtain gradients respective of each domain. They then obtain gradients for a batch on a specific dataset $D_{spe}$ (which for consistency of analysis we will simply refer to as $D_{eval}$), and computes the inner product between the gradients of each domain and that of $D_{eval}$. In order to equalize model performance with Skill-It and Aioli, we will assume that a batch from each training domain contains $\frac{\delta}{m} D_{train}$ tokens, and a batch from the specific dataset contains $D_{eval}$ tokens. Then, computing each domain's gradient has a cost of $6(\frac{\delta}{m})D_{train}N$, and computing the specific dataset's gradient has a cost of $6D_{eval}N$. 
We assume that computing the inner product between two model gradients is linear in $N$ so there is an additional $mN$ compute overhead. Therefore,
the total compute cost is 
\[ 
    6(1 + \delta) D_{train}N + 6T(D_{eval} + m)N.
\]

\subsection{R\&B (ours)}

Similar to all above methods, we split training into $T$ rounds, and reweight domain proportions at the end of each round. Like Dynamic Gradient Alignment, we opt to use gradient inner product information to inform our reweighting procedure. Crucially, however, we make two observations: (1) gradients per domain can be collected on the fly during normal backpropagation, and (2) our optimization problem only requires knowledge about the respective proportions of $D_{eval}$, and does not use gradient or loss information about $D_{eval}$. Instead, we simply compute the equivalent of matrix $A$ which is a Gram matrix comprised of the inner products between the gradients of each respective domain. As a result, the compute cost of training our method is simply $6(D_{train})N$, and the compute cost of evaluation is just $m^2N$. Therefore, the total compute cost is 
\[ 
    6(D_{train})N + m^2N.
\]

\noindent \textbf{Efficiency Analysis.} 
Under typical conditions where the number of skills is much smaller than the size of the evaluation dataset, \textbf{R\&B demonstrates superior computational efficiency}. Its evaluation overhead scales only with $m^2$ rather than with $D_e$, making it particularly advantageous for scenarios with large evaluation datasets but a moderate number of domains. 

When comparing specifically with DGA, R\&B's advantage depends on the relationship between the number of domains and evaluation data size. R\&B is more efficient when $m < \sqrt{D_e}$, which holds in most practical settings. Even when $m$ approaches or exceeds $\sqrt{D_e}$, R\&B maintains partial efficiency benefits through its 6× lower coefficient on the evaluation term, and by avoiding the additional $\delta$ fraction for computing  gradients.

\section{Implementation Details}
\label{appendix:implementation_details}
We start with an explanation of gradient computations. 
\subsection{Efficient Gradient Computation}
Standard training pipelines provide per-batch gradients, but we need per-example gradients in order to aggregate per-skill gradients for our method. We perform a gradient decomposition similar to the method introduced in \cite{goodfellow_efficient_2015} to efficiently circumvent this. A simple application of the chain-rule means we can exactly recover per-example gradients of a linear layer in a mini-batch with just one backwards pass by multiplying an example's input with that mini-batch's gradient.

Adopting the notational convention from \cite{wang_greats_2024}, let $\textbf{s} = \textbf{aW}$ be a linear layer where $\textbf{W} \in \mathbb{R}^{d_1\times d_2}$ is a weight matrix, $\textbf{a}=(\textbf{a}^{(1)},\ldots,\textbf{a}^{(B)})^\top\in\mathbb{R}^{B\times d_1}$ is the input to the mini-batch, and $\textbf{s}=(\textbf{s}^{(1)},\ldots,\textbf{s}^{(B)})^\top\in\mathbb{R}^{B\times d_2}$ is the layer's pre-activation output. Denote by $\ell^{(i)}$ the loss on the $i\textsuperscript{th}$ example in the mini-batch. Let $\ell$ denote the summed loss of the mini-batch. It follows from the chain rule that the gradient of $\ell^{(i)}$ with respect to $\textbf{W}$ can be expressed as
$$
\frac{\partial \ell^{(i)}}{\partial \textbf{W}} = \frac{\partial \ell^{(i)}}{\partial s^{(i)}} \frac{\partial s^{(i)}}{\partial \textbf{W}} = \frac{\partial \ell^{(i)}}{\partial s^{(i)}} a^{(i)} = \frac{\partial \ell}{\partial s^{(i)}} a^{(i)},
$$
where the last equality follows from the fact that the $\frac{\partial \ell^{(j)}}{\partial s^{(i)}}$ terms disappear when $i\neq j.$ Notably, the $\frac{\partial \ell}{\partial s^{(i)}}$ term is available through standard training, and $a^{(i)}$ can be easily tracked. We aggregate per-example gradients into their respective skills, allowing for efficient per-skill gradient computation.
\section{Experimental Details}
\label{appendix:experimental_details}

We evaluate our method, R\&B, against four baseline data mixing methods: Stratified sampling, Skill-It, Aioli, and DGA (Dynamic Gradient Alignment). We conducted experiments on three datasets of varying sizes and characteristics.

\subsection{Datasets}

\begin{itemize}
    \item \textsc{Dolly-15k}: An instruction follow-up dataset consisting of 15,000 examples with eight original skill categories.
    
    \item \textsc{Sup-NatInst} (Natural Instructions In-Domain): A 285k dataset created from Natural Instructions by selecting 100 tasks out of 876 available tasks containing 38 original skill categories.
    
    \item \textsc{Sup-NatInst}-Test (Natural Instructions Out-of-Domain): A 3.56M dataset created from Natural Instructions with questions and answers from domains not seen \textsc{Sup-NatInst}, containing 60 original skill categories.
\end{itemize}

For in-distribution datasets (NI-ID and Dolly-15k), we use 90\% of the total dataset for training and 5\% for testing. For regrouping experiments, we generate embeddings using ModernBERT with a dimension of 786 and cluster the datasets using k-means.

\subsection{Experimental Configurations}

\begin{table}[htbp]
\centering
\caption{Experimental Settings Across Different Datasets}
\label{tab:all_settings}
\begin{tabular}{lll}
\toprule
\textbf{Parameter} & \textbf{Dolly-15k, NI-ID, NI-OOD} & \textbf{S1-59k} \\
\midrule
Model & GPT-Neo 125M & Qwen2-0.5B \\
Training batch size & 16 & 4 \\
Evaluation batch size & 16 & 16 \\
Context Length & 512 & 8192 \\
Learning rate & 5e-5 & 1e-5 \\
Optimizer & AdamW & AdamW \\
\bottomrule
\end{tabular}
\end{table}

\begin{table}[t]
\centering
\caption{Training budget allocations and experimental settings for all datasets and methods.}
\label{tab:exp_settings}
\resizebox{\textwidth}{!}{%
\begin{tabular}{lcccl}
\toprule
\textbf{Dataset} & \textbf{Domains} & \textbf{Method} & \textbf{Training Steps} & \textbf{Method-Specific Settings} \\
\midrule
\multirow{10}{*}{Dolly-15k} & \multirow{5}{*}{Original (8)} & Stratified & 2000 & full eval dataset \\
& & Skill-It & 2000 & 200 steps for graph estimation, full eval dataset \\
& & Aioli & 2000 & rounds=2, sweeps=1, full eval dataset \\
& & DGA & 2000 & full eval dataset \\
& & R\&B & 2000 & full eval dataset \\
\cmidrule{2-5}
& \multirow{5}{*}{Regrouped (7)} & Stratified & 2000 & full eval dataset \\
& & Skill-It & 2000 & 200 steps for graph estimation, full eval dataset \\
& & Aioli & 2000 & rounds=2, sweeps=1, full eval dataset \\
& & DGA & 2000 & full eval dataset \\
& & R\&B & 2000 & full eval dataset \\
\midrule
\multirow{10}{*}{NI-ID} & \multirow{5}{*}{Original (38)} & Stratified & 2000 & full eval dataset \\
& & Skill-It & 2000 & 200 steps for graph estimation, full eval dataset \\
& & Aioli & 2000 & rounds=2, sweeps=1, full eval dataset \\
& & DGA & 2000 & full eval dataset \\
& & R\&B & 2000 & full eval dataset \\
\cmidrule{2-5}
& \multirow{5}{*}{Regrouped (30)} & Stratified & 2000 & full eval dataset \\
& & Skill-It & 2000 & 200 steps for graph estimation, full eval dataset \\
& & Aioli & 2000 & rounds=2, sweeps=1, full eval dataset \\
& & DGA & 2000 & full eval dataset \\
& & R\&B & 2000 & full eval dataset \\
\midrule
\multirow{11}{*}{NI-OOD} & \multirow{5}{*}{Original (60)} & Stratified & 2000 & full eval dataset \\
& & Skill-It & 2000 & 200 steps for graph, full eval dataset \\
& & Aioli & 2000 & rounds=1, sweeps=1, 50k eval samples \\
& & DGA & 2000 & full eval dataset \\
& & R\&B & - & - \\
\cmidrule{2-5}
& \multirow{5}{*}{Regrouped (100)} & Stratified & 2000 & full eval dataset \\
& & Skill-It & 1000 & 25 steps for graph, 10k eval samples \\
& & Aioli & 1000 & rounds=1, sweeps=1, 50k eval samples \\
& & DGA & 2000 & full eval dataset \\
& & R\&B & 2000 & full eval dataset \\
\bottomrule
\end{tabular}}

\begin{minipage}{\textwidth}
\footnotesize
$^*$ Note: There is no result for R\&B in the NI-OOD original column because the method requires finding training skills in the evaluation dataset. In out-of-domain settings, test skills and train skills are different, causing the Gp\_norm in R\&B to be NaN.
\end{minipage}
\end{table}

\begin{table}[ht]
\centering
\caption{Training budget allocations and experimental settings for S1-59K dataset.}
\resizebox{\textwidth}{!}{
\begin{tabular}{llcll}
\toprule
\textbf{Dataset} & \textbf{Domains} & \textbf{Method} & \textbf{Training Steps} & \textbf{Method-Specific Settings} \\
\midrule
\multirow{4}{*}{S1-59K} & \multirow{2}{*}{Original (6)} & Stratified & 500 & full eval dataset \\
 & & R\&B & 500 & num\_layers\_to\_track=1, lamda=3, full eval dataset \\
\cmidrule{2-5}
 & \multirow{2}{*}{Regrouped (10)} & Stratified & 500 & full eval dataset \\
 & & R\&B & 500 & num\_layers\_to\_track=1, lamda=3, full eval dataset \\
\bottomrule
\end{tabular}
}
\end{table}
\section{Extended Training Results}
\label{appendix:extended_training}

\begin{figure}[!htbp]
    \centering
    \begin{minipage}[c]{0.45\textwidth}
        \centering
        \includegraphics[width=\textwidth]{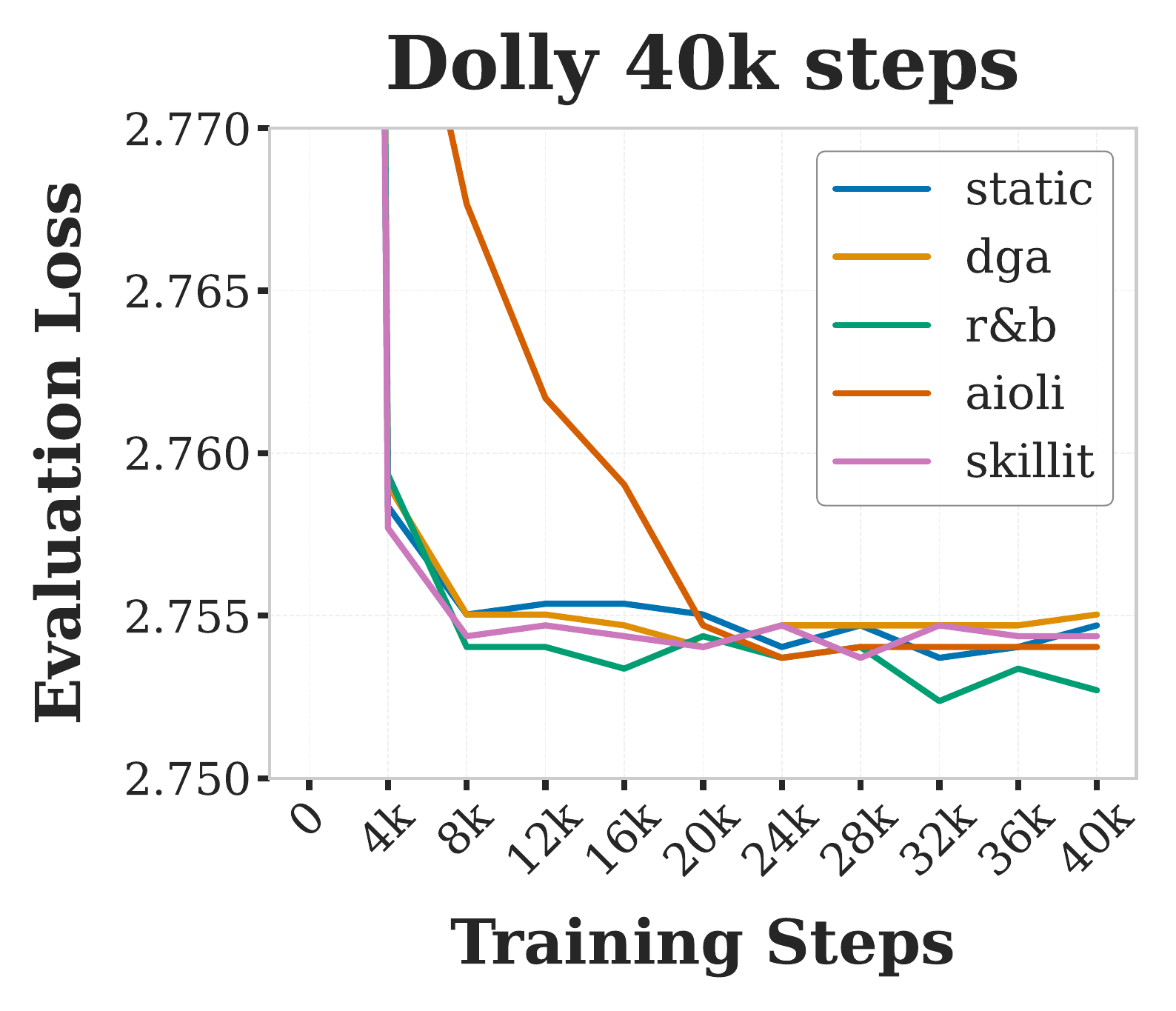}
    \end{minipage}
    \hfill
    \begin{minipage}[c]{0.5\textwidth}
        \centering
        \begin{tabular}{lc}
            \toprule

            \rowcolor{gray!15}
            \textbf{Method} & \textbf{Evaluation Loss} \\
            \midrule
            \rowcolor{white}
            Stratified & 2.733 \\
            \rowcolor{gray!15}
            DGA & 2.733 \\
            \rowcolor{white}
            Aioli & 2.724 \\
            \rowcolor{gray!15}
            Skill-It & 2.728 \\
            \rowcolor{white}
            R\&B (ours) & \cellcolor{brown!15}\textbf{2.723} \\
            \bottomrule
        \end{tabular}
    \end{minipage}
    \caption{Left: Training loss curves for Dolly-15k trained for 40,000 steps with different data mixing methods using the original category partitioning. Right: Average test loss on Dolly-15k after 40,000 training steps using original category partitioning. \textbf{Highlighted} values (with brown background) indicate the best overall performance.}
    \label{fig:dolly_40k_combined}
\end{figure}

In this appendix, we provide additional experimental results for training on the Dolly-15k dataset for an extended period of 40,000 steps. This allows us to understand the long-term behavior of R\&B compared to different data mixing methods. 
Figure~\ref{fig:dolly_40k_combined} shows the training loss curves for different data mixing methods over the full 40,000 steps (left) alongside the final evaluation loss after 40,000 steps (right). R\&B maintains consistent performance over other data mixing methods, demonstrating the stability of our approach and achieving the best performance with a loss of 2.723.

For this extended training experiment, we focus on the original category partitioning (rather than our regrouping approach) to demonstrate R\&B's effectiveness even with pre-defined categories when given sufficient training time. We observe that all data mixing methods eventually converge to similar performance levels after sufficient training, but R\&B maintains a consistent advantage throughout the training process. This suggests that our gradient-based approach effectively captures the optimal training dynamics from early stages, leading to more efficient parameter updates throughout the training process.

\begin{figure}
    \centering
    \includegraphics[width=\linewidth]{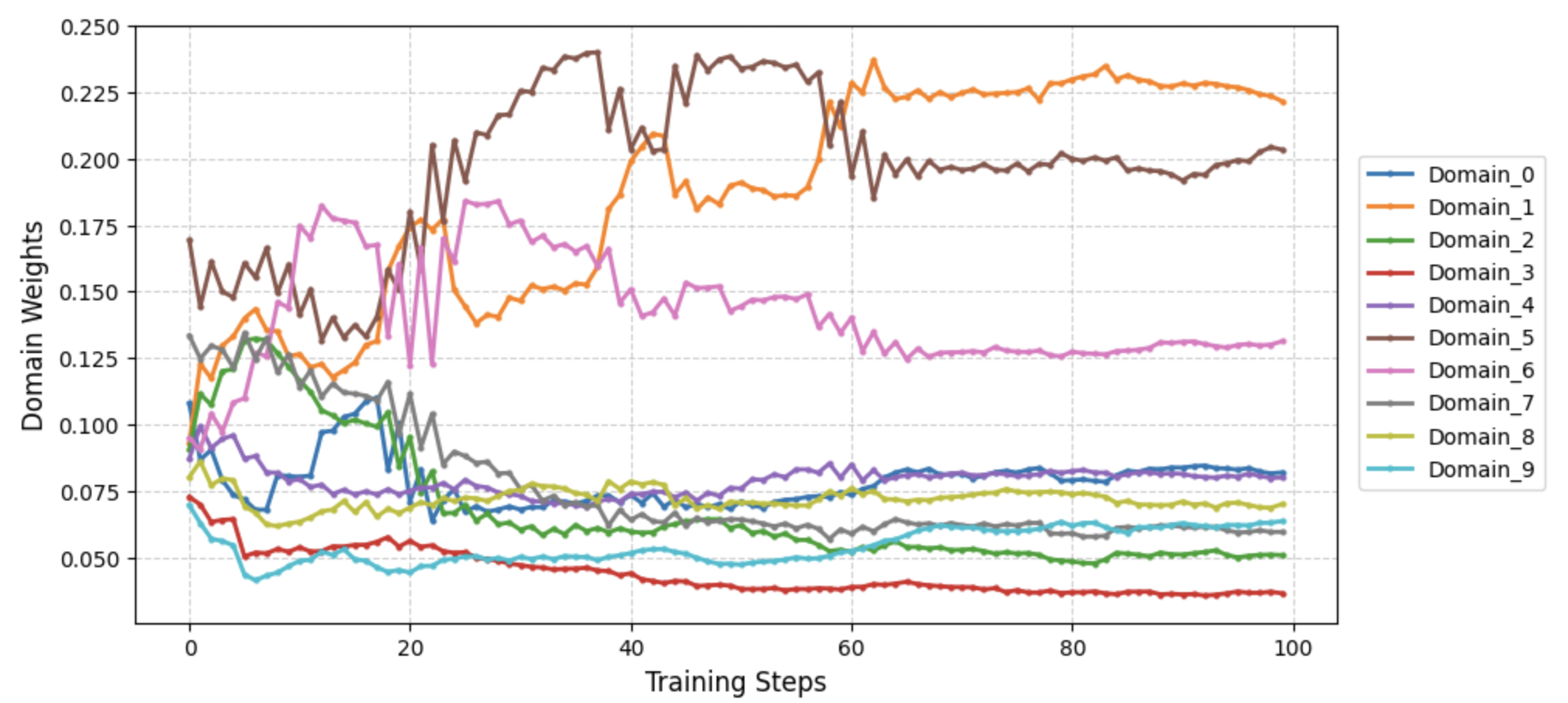}
    \caption{Domain weight evolution during training. Our method dynamically adjusts the importance of each domain throughout the training process, with Domains 1 and 5 eventually receiving the highest weights while Domains 0, 2, 3, 7, 8, and 9 are downweighted over time.}
    \label{fig:clip-proportions}
\end{figure}
\begin{figure}
    \centering
    \includegraphics[width=\linewidth]{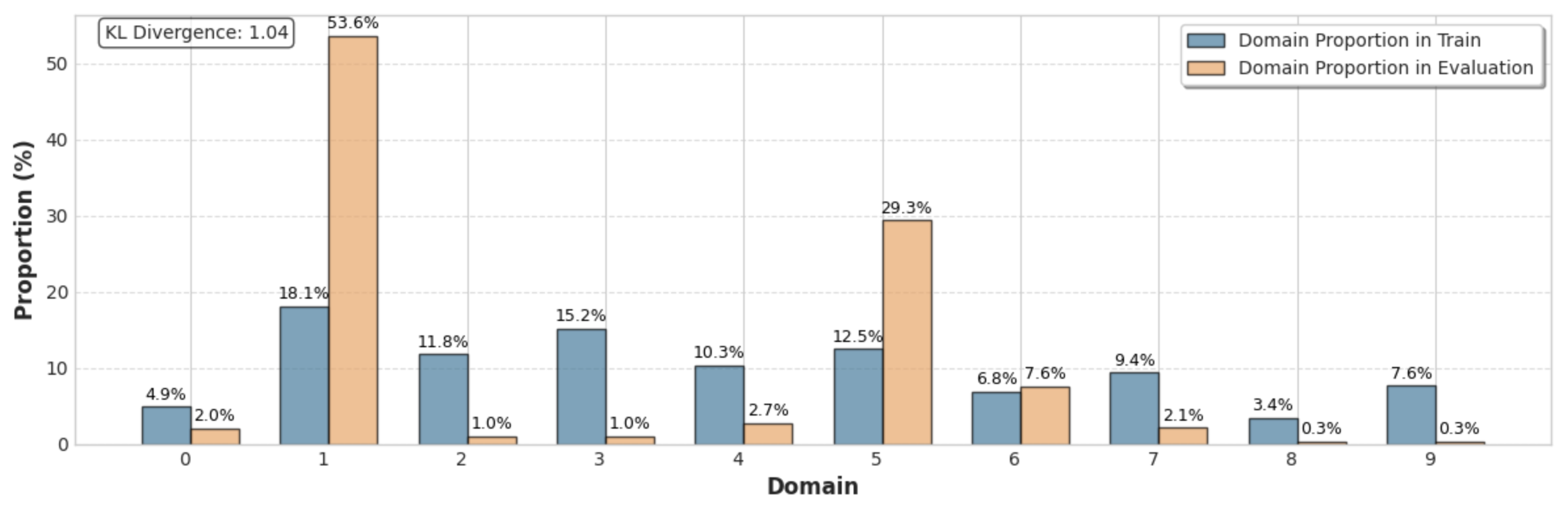}
    \caption{Comparison between domain proportions in training versus evaluation data (KL Divergence: 1.04). Our method strategically reweights domain distributions during training to optimize performance, notably increasing the representation of Domains 1 and 5 while reducing emphasis on Domains 2, 3, 4, 7, 8, and 9 compared to their evaluation proportions.}
    \label{fig:eval-proportions}
\end{figure}

Furthermore, we study how R\&B reweights proportions over time. As illustrated in Figure \ref{fig:clip-proportions}, our method employs a dynamic approach to domain importance throughout the training process. Initially, domain weights fluctuate significantly as the model explores the contribution of each domain to overall performance. By the midpoint of training (around steps 40-60), a clear pattern emerges with Domains 1 and 5 receiving substantially higher weights (reaching approximately 0.225) compared to other domains. Notably, while these weights gradually trend toward the evaluation distribution proportions shown in Figure 6, they never completely converge to match the actual evaluation proportions. For instance, Domain 1 maintains a training weight of around 0.225 even though its evaluation proportion is higher at 53.6\%, and Domain 5 stabilizes at approximately 0.200 despite its 29.3\% evaluation proportion. This deliberate partial convergence suggests that optimal performance requires a strategic balance—influenced by but not identical to the evaluation distribution. 

\section{Clustering Interpretation}
\label{appendix:clustering}

In this section, we provide some interpretation about the groups discovered via clustering.

\begin{table}
\centering
\begin{tabular}{|p{0.45\textwidth}|p{0.45\textwidth}|}
\hline
\textbf{Original Category} & \textbf{Regrouped Cluster} \\
\hline
brainstorming & Cluster 0: General knowledge and open-ended questions covering a wide range of topics from science, technology, to basic concepts \\
\hline
classification & Cluster 1: Music-related queries focusing on instrument classification, musical theory, and instrument comparisons \\
\hline
closed QA & Cluster 2: Information extraction and summarization tasks about various topics including companies, historical figures, and specific domains \\
\hline
generation & Cluster 3: Classification tasks primarily involving animals, colors, household items, and biological categorizations \\
\hline
information extraction & Cluster 4: Sports-related queries spanning multiple disciplines including golf, F1 racing, Olympics, and team sports \\
\hline
open QA & Cluster 5: Entertainment and pop culture queries about movies, TV shows, musicians, artists, and historical personalities \\
\hline
summarization & Cluster 6: Lifestyle and creative brainstorming queries covering diverse topics from home improvements to personal recommendations \\
\hline
creative writing &  \\
\hline
\end{tabular}
\caption{Mapping between original categories and regrouped clusters}
\label{tab:category_mapping}
\end{table}

Table \ref{tab:category_mapping} shows the difference in groups before and after clustering on the Dolly-15k dataset. The left column displays the initial eight categories used to organize the text dataset during collection: brainstorming, classification, closed QA, generation, information extraction, open QA, summarization, and creative writing. The right column shows the seven distinct clusters that emerged when applying our clustering algorithm to the entire corpus. Interestingly, rather than following the original task-based boundaries, these clusters primarily organized around content domains, and subject matter, and subject length. This suggests that semantic content features may be more salient for learning features than the original task-based categorization framework, potentially offering new insights into how language models naturally organize information.

\end{document}